\def\eqref#1{equation~\ref{#1}}
\def\1{\bm{1}}
\def\rvb{{\mathbf{b}}}
\def\rvu{{\mathbf{i}}}
\def\rvu{{\mathbf{u}}}
\def\rvv{{\mathbf{v}}}
\def\rvw{{\mathbf{w}}}
\def\rvx{{\mathbf{x}}}
\def\rvz{{\mathbf{z}}}
\DeclareMathAlphabet{\mathsfit}{\encodingdefault}{\sfdefault}{m}{sl}
\SetMathAlphabet{\mathsfit}{bold}{\encodingdefault}{\sfdefault}{bx}{n}
\newtheorem{theorem}{Theorem}[section]
\newtheorem{proposition}[theorem]{Proposition}
\newtheorem{lemma}[theorem]{Lemma}
\newtheorem{corollary}[theorem]{Corollary}
\newtheorem{definition}[theorem]{Definition}
\newtheorem{remark}[theorem]{Remark}
\title{Better NTK Conditioning: A Free Lunch from \\ (ReLU) Nonlinear Activation in Wide Neural Networks}
\author{%
Chaoyue Liu\\
Elmore Family School of Electrical and Computer Engineering\\
Purdue University\\
\texttt{cyliu@purdue.edu}\\
\And
Han Bi\\
Department of Physics and Astronomy\\
Purdue University\\
\texttt{bi53@purdue.edu}\\
\And
Like Hui\\
Halıcıoğlu Data Science Institute\\
University of California San Diego \\
\texttt{lhui@ucsd.edu}\\
\And
Xiao Liu\\
Rosen Center For Advanced Computing\\
Purdue University\\
\texttt{liu4201@purdue.edu}\\
}
\begin{document}

\maketitle

\begin{abstract}
  Nonlinear activation functions are widely recognized for enhancing the expressivity of neural networks, which is the primary reason for their widespread implementation. In this work, we focus on ReLU activation and reveal a novel and intriguing property of nonlinear activations. By comparing enabling and disabling the nonlinear activations in the neural network, we demonstrate their specific effects on wide neural networks: (a) \emph{better feature separation}, i.e.,  a larger angle separation for similar data in the feature space of model gradient, and (b) \emph{better NTK conditioning}, i.e.,  a smaller condition number of neural tangent kernel (NTK). Furthermore, we show that the network depth (i.e., with more nonlinear activation operations) further amplifies these effects; in addition, in the infinite-width-then-depth limit, all data are equally separated with a fixed angle in the model gradient feature space, regardless of how similar they are originally in the input space. 
   Note that, without the nonlinear activation, i.e., in a linear neural network, the data separation remains the same as for the original inputs
   and NTK condition number is equivalent to the Gram matrix, regardless of the network depth. Due to the close connection between NTK condition number and convergence theories, our results imply that nonlinear activation
  helps to improve the worst-case convergence rates of gradient based methods. 
\end{abstract}

\section{Introduction}

It is well known that nonlinear activation functions increase the expressivity of neural networks, which is the primary reason of their widespread implementation. A nonlinearly activated neural network can approximate any continuous function to arbitrary precision, as long as there are enough  neurons in the hidden layers \citep{hornik1989multilayer,cybenko1989approximation,hanin2017approximating}, while  without it -- as in a {\it linear neural network}, the network reduces to linear models of the input. In addition, deeper neural networks, which have more nonlinearly activated layers, have exponentially greater expressivity than shallower ones \citep{telgarsky2015representation,poole2016exponential, raghu2017expressive,montufar2014number,wang2018exponential}, indicating that the network depth promotes the power of nonlinear activation functions.

In this paper, we reveal a novel and interesting effect of nonlinear activations that has not been previously noticed, despite their widespread application: the nonlinearity leads to larger data separation in the feature space of model gradient, and helps to decrease the condition number of neural tangent kernel (NTK).  We also show that the depth of the network further amplifies these effects, namely, 
a deeper neural network has a better feature separation and a smaller NTK condition number, than a shallower one. While distinct and independent from the property of increasing expressivity, this property of nonlinear activations resembles the former in the manner that the effects vanish in the absence of nonlinear activations:  removing the nonlinear activations in a neural network, the data separation and NTK condition number reduce to values observed in a linear model. Hence, the effect purely attributes to the presence of nonlinear activations.

Specifically, we first show the \emph{better separation phenomenon}, i.e., improved separation for similar data in the model gradient feature space. 
We prove that, for a wide ReLU network $f$, any pair of data input vectors $\rvx$ and $\rvz$ that have similar directions (i.e., small but non-zero  angle $\theta_{in}$ between  $\rvx$ and $\rvz$) become more directionally separated in the model gradient space (i.e., model gradient angle $\phi$ between $\nabla f(\rvx)$ and $\nabla f(\rvz)$ is larger than $\theta_{in}$) with high probability of random initialization. 
We also find that deeper ReLU networks result in even better feature separation, i.e., larger $\phi$. 
Ultimately, in the infinite-width-then-depth limit, all data are equally separated with an angle $\sim 75.5^\circ$ in the model gradient feature space, regardless of the input angle $\theta_{in}$, as long as $\theta_{in}$ is non-zero. 
Numerical simulation also show that the better separation phenomenon generalizes to other commonly used nonlinear activation functions, including GeLU, tanh, etc. 

We further show the \emph{better NTK conditioning} property of nonlinear activation, i.e., smaller NTK condition number. 
 We prove that, as a consequence of the better feature separation, the NTK condition number of a wide ReLU network is strictly smaller than that without the nonlinearity, when the training dataset is not degenerate (i.e., no pair of training inputs are parallel). Moreover, with a larger depth, the NTK condition number becomes smaller. 
The intuition is that, if there exists a pair of similar inputs $\rvx$ and $\rvz$ in the training set (i.e., the angle between $\rvx$ and $\rvz$ is small), which is usually the case for large datasets, then NTK of linear neural networks must have close-to-zero smallest eigenvalues, resulting in extremely large NTK condition numbers. The  activation makes these similar data more separated, hence it helps to increase the smallest eigenvalues of NTK, which in turn leads to a smaller NTK condition number. We further show that, in the infinite-width-then-depth limit, the NTK condition number of ReLU network converges to a fixed number $\frac{n+4}{3}$, which is independent of the data distribution and much smaller than typical NTK condition numbers.

\emph{Connection with optimization theory.} While there could be multiple implications of the above property in various aspects, here we present its connection with existing optimization theories. 
Recent optimization theories showed that the NTK condition number $\kappa$, or the smallest eigenvalue of NTK,  controls the theoretical convergence rate of gradient descent algorithms on wide neural networks \citep{du2018gradient,du2018gradientdeep,liu2022loss}. Combined with these theories, our findings imply that: (a), the activation function has the effect of improving the worst-case convergence rate of gradient descent, and (b), deeper wide ReLU networks have faster convergence rate than shallower ones. Previous works often focus on accelerating convergence via a better function of $\kappa$ while assuming $\kappa$ is given and fixed. Our findings provide a different perspective of achieving acceleration by tuning $\kappa$ itself.
Experimentally, we indeed find that deeper networks converge faster than shallower ones.

\paragraph{Contributions.} We summarize our contributions below. We find that:
\begin{itemize}[leftmargin=*]
\vspace{-8pt}
    \item Nonlinear activation functions induce better separation between similar data in the feature space of model gradient. A larger network depth amplifies this better separation phenomenon.
    % \vspace{-8pt}
    \item Nonlinear activations have the effect of decreasing the NTK condition number. A larger depth of the network further enhances this better NTK conditioning property.
    % \vspace{-8pt}
    \item This better NTK conditioning property leads to faster convergence rate of gradient descent. We empirically verify this on various real world datasets.
\end{itemize}

The paper is organized as follow: Section \ref{sec:setup} describes the setting and defines the key quantities and concepts, and analyzes linear neural networks as the baseline for comparison; Section \ref{sec:data_separation} and \ref{sec:condition_number} discuss our main results on the better separation  and  better conditioning of nonlinear activation, respectively; Section \ref{sec:optimization} discusses the implication on theoretical convergence rates; Section \ref{sec:conclusion} concludes the paper. 
Proofs of theorems and main corollaries can be found in the appendix.

\subsection{Related work}  
 
{ NTK and its spectrum} have been extensively studied \citep{lee2019wide,bietti2019inductive,liu2020linearity, fan2020spectra, geifman2020similarity,xiao2020disentangling,nguyen2021tight,belfer2021spectral,chendeep}, since the discovery of constant NTK for infinitely wide neural networks \citep{jacot2018neural}. \cite{velikanov2021explicit} shows that the NTK spectrum of an infinitely wide ReLU network asymptotically exhibits a power law. 
Its distribution is further shown to be similar to that of Laplace kernel \citep{geifman2020similarity,chendeep}, and can be computed \citep{fan2020spectra}. \citet{nguyen2021tight} analyzed the upper and lower bounds for the smallest NTK eigenvalue in $O()$ and $\Omega()$, respectively. 
With the assumption of spherically uniformly distributed data where the spectrum of (elementary-wise) power of the Gram matrix becomes simplified, \cite{murray2023characterizing}, utilizing Hermite polynomials and power series expansion of NTK, provides the order of the smallest eigenvalue of the NTK of two-layer ReLU network in the infinite width limit. Under the same data setting, \cite{ronen2019convergence} computed the NTK eigenvalues for the two-layer ReLU network.
Relying on the values of off-diagonal entries of the NTK matrix in the infinite \emph{depth} limit, another work \cite{xiao2020disentangling} analyzed the \emph{asymptotic} dependence of the NTK condition number on the network depth $L$ for ReLU networks, which shows a decreasing trend as $L$ increases, consistent with our result. 

In contrast to prior works, we are able to distill the effect of ReLU activation function via a sharp comparison between scenarios with and without ReLU, at any finite depth without data distribution assumption. Note that, without an assumption on data distribution, NTK spectral analysis becomes much harder and many data-distribution-dependent results may not hold any more. Moreover, at finite depth, off-diagonal entries of the NTK matrix has not converged and are typically quite different from its infinite depth limit, which makes analysis even harder.

 We are aware of a prior work \citep{arora2018optimization} which has results of similar flavor. It shows that the depth of a linear neural network may help to accelerate optimization via an implicit pre-conditioning of gradient descent. We note that this prior work is in an orthogonal direction, as its analysis is based on the linear neural network, which is activation-free, while our work focus on the better-conditioning effect of  activation functions.

\section{Setup and Preliminaries}\label{sec:setup}
\paragraph{Notations for general purpose.}
We denote the set $\{1,2,\cdots, n\}$ by $[n]$. We use bold lowercase letters, e.g., $\rvv$, to denote vectors, and capital letters, e.g., $A$, to denote matrices. Given a vector, $\|\cdot \|$ denotes its Euclidean norm.  Inner product between two vectors is denoted by $\langle \cdot, \cdot \rangle$. 
Given a matrix $A$, we denote its $i$-th row by $A_{i:}$, its $j$-th column by $A_{: j}$, and its entry at $i$-th row and $j$-th column by $A_{ij}$.  We also denote the expectation (over a distribution) of a variable by $\mathbb{E}[\cdot]$, and the probability of an event by $\mathbb{P}[\cdot]$.
  For a model $f(\rvw;\rvx)$ which has parameters $\rvw$ and takes $\rvx$ as input, we use $\nabla f$ to denote its first derivative w.r.t. the parameters $\rvw$, i.e., $\nabla f:= \partial f/\partial \rvw$.

\paragraph{(Fully-connected)  neural network.}  Let $\rvx\in\mathbb{R}^d$ be the input, $m_l$ be the width (i.e., number of neurons) of the $l$-th layer, $W^{(l)}\in \mathbb{R}^{m_l \times m_{l-1}}$, $l\in[L+1]$, be the matrix of the parameters at layer $l$, and $\sigma(z)$ be the activation function, which is applied element-wise. 
A (fully-connected) neural network $f$, with $L$ hidden layers, is defined as:
\begin{align}
    % & \alpha^{(1)}(\rvx) = \sigma\left( \frac{c_\sigma}{\sqrt{d}} W^{(1)}\rvx\right), \nonumber \\
    & \alpha^{(0)}(\rvx) = \rvx \nonumber \\
    & \alpha^{(l)}(\rvx) = \frac{\sqrt{c_\sigma}}{\sqrt{m_l}}\sigma\left(  W^{(l)}\alpha^{(l-1)}(\rvx)\right), ~~ \forall l \in \{1, 2, \cdots, L\}, \label{eq:fully_connected_defi}\\
    & f(\rvx) = W^{(L+1)}\alpha^{(L)}(\rvx),\nonumber
\end{align}
where $c_\sigma=(\mathbb{E}_{z\sim \mathcal{N}(0,1)}[\sigma(z)^2])^{-1}$. For the special case of ReLU activation function: $\sigma(z) = \max\{0,z\}$, $c_\sigma=2$.
We also denote $\tilde{\alpha}^{(l)}(\rvx)\triangleq \frac{\sqrt{c_\sigma}}{\sqrt{m_l}}  W^{(l)}\alpha^{(l-1)}(\rvx)$. Following the NTK initialization scheme \citep{jacot2018neural}, these parameters are randomly initialized i.i.d. according to the normal distribution $\mathcal{N}(0,1)$. The scaling factor $\sqrt{c_\sigma}/\sqrt{m_l}$ is introduced to normalize the hidden neurons \citep{du2018gradientdeep}. We denote the collection of all parameters by $\rvw$.

\begin{remark}
    In this paper, we consider the {\it bias-free} setting where no bias term is included when computing the hidden neurons in Eq.(\ref{eq:fully_connected_defi}). In fact, the bias term  can potentially lead to different results, as have been noticed in \cite{jacot2022freeze}.
\end{remark}

Without loss of generality, we set the layer widths as
\begin{equation}\label{eq:network_width_setting}
    m_0 = d, ~~ m_{L+1} = 1,  ~~and ~ m_l = m, ~ ~ \forall ~ l\in[L].
\end{equation}
and call $m$ as the network width.  In the rest of the paper, we typically consider wide neural networks, i.e., networks with large widths $m$ and fixed depths $L$.

\paragraph{Linear neural network.} For a comparison purpose, we also consider a linear neural network $\bar{f}$, which is the same as the  neural network $f$ defined above, except that the activation function is the identity function $\sigma(z) = z$ and that the scaling factor of Eq.(\ref{eq:fully_connected_defi}) is $1/\sqrt{m}$. 

\paragraph{Model gradient feature and neural tangent kernel (NTK).} Given a model $f$ (e.g., a neural network)  with parameters $\rvw$, we call the 
the derivative of model $f$ with respect to all its parameters as the {\it model gradient feature} vector $\nabla f(\rvw;\rvx)$ for the input $\rvx$.
The NTK $\mathcal{K}$ is defined as
\begin{equation}
    \mathcal{K}(\rvw;\rvx_1,\rvx_2) = \langle \nabla f(\rvw;\rvx_1), \nabla f(\rvw;\rvx_2) \rangle,
\end{equation}
where $\rvx_1$ and $\rvx_2$ are two arbitrary network inputs. For a given dataset $\mathcal{D}= \{(\rvx_i,y_i)\}_{i=1}^n$, 
there is a gradient feature matrix $F$ such that each row $F_{i\cdot}(\rvw) = \nabla f(\rvw;\rvx_i)$ for all $i\in [n]$. 
The $n\times n$ NTK matrix $K(\rvw)$ is defined such that its entry $K_{ij}(\rvw)$, $ i, j \in [n]$, is $\mathcal{K}(\rvw;\rvx_i,\rvx_j)$. It is easy to see that the NTK matrix
\begin{equation}\label{eq:ntk_gradient_feature}
    K(\rvw) = F(\rvw)F(\rvw)^T.
\end{equation}
Note that the NTK for a linear model reduces to the Gram matrix $G\in\mathbb{R}^{d\times d}$, where each row of the matrix $X$ is an input feature $\rvx_i$, i.e., $X_{i\cdot} = \rvx_i^T$.

As pointed out by \cite{liu2020linearity,lee2019wide,jacot2018neural}, a neural network with large width $m$ is approximately a linear model on the model gradient features $\nabla f(\rvw_0;\rvx)$:
\begin{equation}
    f(\rvw;\rvx) \approx f(\rvw_0;\rvx) + \nabla f(\rvw_0;\rvx)^T (\rvw-\rvw_0) + O(1/\sqrt{m}).
\end{equation}
Hence, the training dynamics of a wide neural network is largely controlled by the model gradient features of the training samples. We will see that 
the \emph{model gradient angle}, i.e., the angle between the model gradient features of an arbitrary pair of inputs, is a key quantity that measures the mutual relations between training samples and is closely related to the NTK condition number and convergence rate.
\begin{definition}[Model gradient angle]
    Given two arbitrary inputs $\rvx,\rvz\in\mathbb{R}^d$, 
    define the model gradient angle as the angle between the  model gradient vectors $\nabla f(\rvx)$ and $\nabla f(\rvz)$: \[\phi(\rvx,\rvz)\triangleq \arccos \left(\frac{\langle \nabla f(\rvx), \nabla f(\rvz)\rangle}{\|\nabla f(\rvx)\|\|\nabla f(\rvz)\|}\right).\] 
\end{definition}

\paragraph{Condition number.} 
The \emph{condition number} $\kappa$ of a positive definite matrix $A$ is defined as the ratio between its maximum eigenvalue and minimum eigenvalue:
\begin{equation}
    \kappa = \lambda_{max}(A)/\lambda_{min}(A).
\end{equation}

In the rest of the paper, we specifically denote the NTK matrix, NTK condition number and model gradient angle for the neural network as $K$, $\kappa$ and $\phi$, respectively, and denote their linear neural network counterparts as $\bar{K}$, $\bar{\kappa}$ and $\bar{\phi}$, respectively. We also denote the condition number of Gram matrix $G$ by $\kappa_0$.

\subsection{Without nonlinear activation: the baseline for comparison}\label{sec:lnn}
To distill the effect of the nonlinear activation function, we need a activation-free case as the baseline for comparison. This baseline is the linear neural network $\bar{f}$, with the same width and depth as $f$. 
 
\begin{theorem}\label{thm:linear_nn_angles}
    Consider a linear neural network $\bar{f}$.  In the limit of infinite network width $m\to\infty$ and at network initialization $\rvw_0$, the following relations hold:
    \begin{itemize}
        \item for any input $\rvx\in\mathbb{R}^d$,
        $\|\nabla f(\rvw_0;\rvx)\| = (L+1) \|\rvx\|$.
        \item for any inputs $\rvx, \rvz\in\mathbb{R}^d$,
        $\bar{\phi}(\rvx,\rvz) = \theta_{in}(\rvx,\rvz)$.
    \end{itemize}
\end{theorem}

This theorem states that, without a nonlinear activation function,  the model gradient map $\nabla f: \rvx \mapsto \nabla f(\rvx)$ does not change the geometrical relationship between any data samples. For any input pairs, the model gradient angle $\bar{\phi}$ remains the same as the input angle $\theta_{in}$. Therefore, it is not surprising that the NTK of a linear network is the same as the Gram matrix (up to a constant factor), as formally stated in the following corollary (which can also be consistently obtained using Theorem 1 of \cite{jacot2018neural}).
\begin{corollary}[NTK condition number without activation]\label{coro:linear_nn}
    Consider a linear neural network $\bar{f}$. In the limit of infinite network width $m\to\infty$ and at network initialization, the NTK matrix $\bar{K} = (L+1)^2 G$. Moreover, $\bar{\kappa} = \kappa_0$.  
\end{corollary}
This corollary tells that, for a linear neural network, regardless of its depth $L$, the NTK condition number $\bar{\kappa}$ is always equal to the condition number $\kappa_0$ of the Gram matrix $G$. Therefore, any non-zero deviations, $\delta \phi \triangleq \phi - \theta_{in}$ from the input angle $\theta_{in}$, and $\delta \kappa \triangleq \kappa - \kappa_0$ from the Gram condition number $\kappa_0$, observed for a nonlinearly activated network  $f$, should be attributed to the corresponding nonlinear activation.

\section{Better separation in model gradient space}\label{sec:data_separation}
In this section, we show that the nonlinear activation function helps data separation in the model gradient space. Our theoretical analysis will focus on the special case of ReLU, and the results will be numerically verified on other nonlinear activations as well. Specifically, for two arbitrary inputs $\rvx$ and $\rvz$ with small $\theta_{in}(\rvx,\rvz)$, we show that the model gradient angle $\phi(\rvx,\rvz)$ is strictly larger than  $\theta_{in}(\rvx,\rvz)$, implying a better angle separation of the two data points in the model gradient space. Moreover, we show that the model gradient angle $\phi(\rvx,\rvz)$ monotonically increases with the number of layers $L$, indicating that deeper network (more ReLU nonlinearity) has better angle separation.

First, we introduce an auxiliary quantity, $l$\emph{-embedding angle} $\theta^{(l)}(\rvx,\rvz)$, which measures the angle between two hidden vectors $\alpha^{(l)}(\rvx)$ and $\alpha^{(l)}(\rvz)$ at infinite width, and an auxiliary function $g: [0,\pi) \to [0,\pi)$ with $g(z)=\arccos \left(\frac{\pi-z}{\pi}\cos z + \frac{1}{\pi}\sin z\right)$. 
We also denote the $l$-fold  composition of $g(\cdot)$ as $g^{\circ l}$. Please see Appendix \ref{secapp:fun_g} for the plot of the function and detailed discussion about its properties. As a highlight, $g$ has the following property: $g$ is approximately (but less than) the identity function $g(z)\approx z$ for small $z$, i.e., $z\ll 1$.

The following lemma gives the relation between the model gradient angle $\phi$ of any two inputs and their original input angle $\theta_{in}$, via the embedding angles $\theta^{(l)}$ and the function $g$.

\begin{lemma}\label{lemma:gradient_angle_expression}
    Consider the ReLU network defined in Eq.(\ref{eq:fully_connected_defi}) with $L$ hidden layers and infinite network width.   Given two arbitrary inputs $\rvx$ and $\rvz$, the angle $\phi(\rvx,\rvz)$ between the model gradients $\nabla f(\rvx)$ and $\nabla f(\rvz)$ satisfies
    \begin{align}
         \cos \phi(\rvx,\rvz) = \frac{1}{L+1}\sum_{l=0}^{L}\left[\cos \theta^{(l)}(\rvx,\rvz) \prod_{l'=l}^{L-1} (1-\theta^{(l')}(\rvx,\rvz)/\pi)\right] +O\left(\frac{1}{\sqrt{m}}\right), \label{eq:thm_angle_phi}
    \end{align}
    with $\theta^{(l)}(\rvx,\rvz) = g^{\circ l}(\theta_{in}(\rvx,\rvz))$.
    Moreover, $\|\nabla f(\rvx)\| = \sqrt{L+1}\|\rvx\| + O\left(\frac{1}{\sqrt{m}}\right)$, for any $\rvx$. 
\end{lemma}
\paragraph{Better feature separation.} Comparing with Theorem \ref{thm:linear_nn_angles} for linear neural networks, we see that the nonlinear ReLU activation only affects the relative direction, but not the the magnitude, of the model gradient. Lemma \ref{lemma:gradient_angle_expression} gives the relation between $\phi$ and the input angle $\theta_{in}$. Figure \ref{fig:phi_plot} plots $\phi$ as a function of $\theta_{in}$ for different network depth $L$.

\begin{figure}
\hspace{-20pt}
    \centering
    \includegraphics[width=0.77\textwidth]{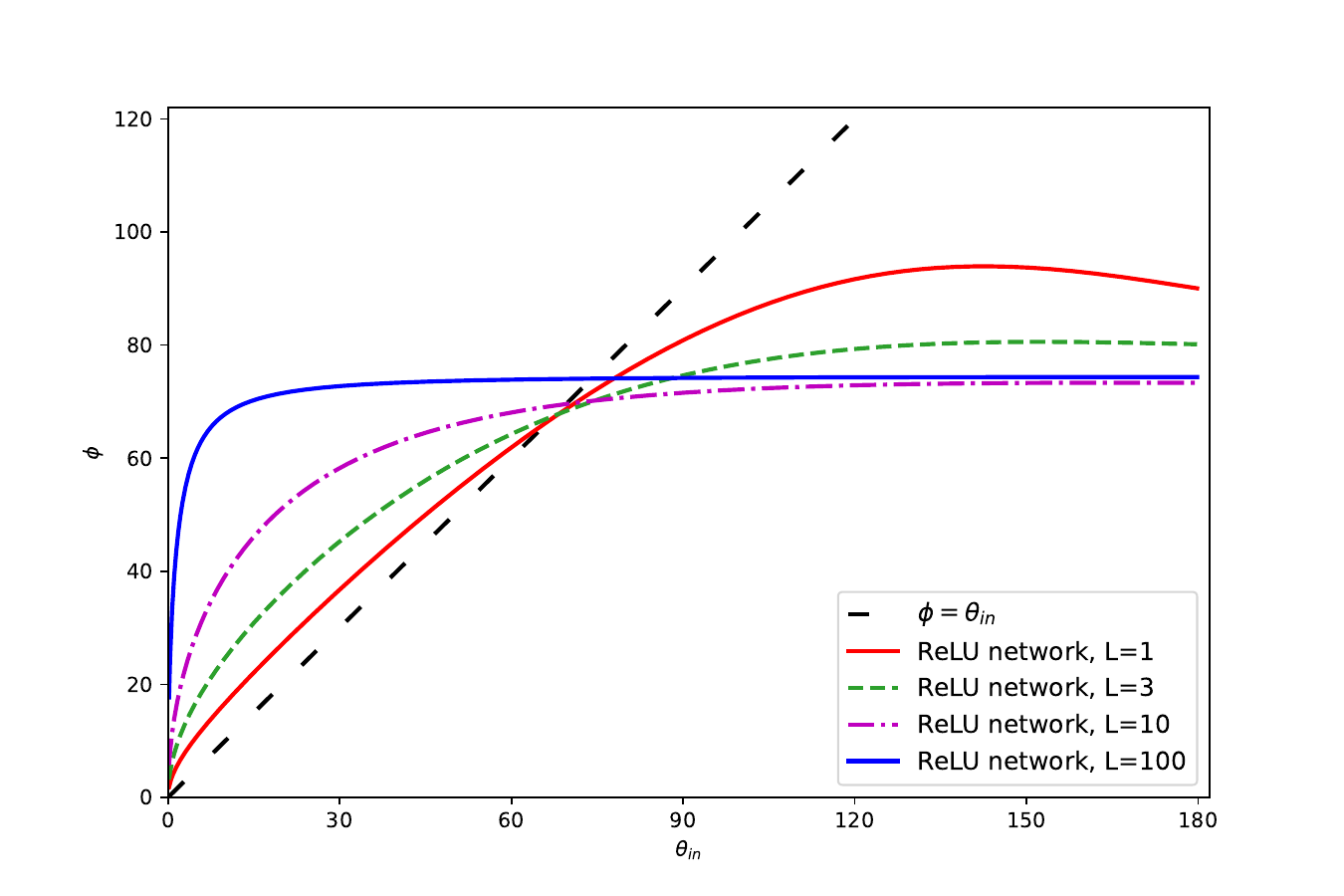}
    \caption{\textbf{Model gradient angles $\phi$ vs. input angle $\theta_{in}$ (according to Lemma \ref{lemma:gradient_angle_expression}).} Linear neural networks (black dash line), of any depth $L$, always have $\bar{\phi} = \theta_{in}$. ReLU neural networks with various depths have better separation $\phi > \theta_{in}$ for similar data (i.e., small $\theta_{in}$). Deeper ReLU networks have better separation than shallow ones for similar data. All neural networks are infinitely wide.}
    \label{fig:phi_plot}
\end{figure}

The \textbf{key observation} is that: for relatively small input angles (say $\theta_{in}\le30^{\circ}$, which is actually not quite small), the model gradient angle $\phi$ is always greater than the input angle $\theta_{in}$. This suggests that, after the mapping $\nabla f: \rvx \mapsto \nabla f(\rvx)$ from the input space to model gradient space, data inputs becomes more (directionally) separated, if they are similar in the input space (i.e., with small $\theta_{in}$). Comparing to the linear neural network case, where $\bar{\phi}(\rvx,\rvz) = \theta_{in}(\rvx,\rvz)$ as in Theorem \ref{thm:linear_nn_angles}, we see that the ReLU nonlinearity results in a better angle separation $\phi(\rvx,\rvz) > \bar{\phi}(\rvx,\rvz)$ for similar data.

Another observation is that:  deeper ReLU networks lead to larger model gradient angles, when $\theta_{in}<30^{\circ}$. This indicates that deeper ReLU networks, which has more layers of ReLU nonlinear activation, makes the model gradient more separated between inputs.
Note that, in the linear network case, the depth does not affect the gradient angle $\bar{\phi}$.

Very similar inputs (i.e., when $\theta_{in}(\rvx,\rvz)\ll 1$), especially those with different labels, are often hard to distinguish and is one of the key factors that makes training difficult, because the decision boundary has to be fine-tuned to separate these very closely located inputs in order to make correct prediction. Hence, the regime of very small input angle  ($\theta_{in}\ll 1$) is of particular interest for model training.  The following theorem confirms the better separation in this regime. 
\begin{theorem}[Better separation for similar data]\label{thm:gradient_angle_approx}
Consider two arbitrary inputs $\rvx, \rvz\in\mathbb{R}^d$, with small input angle $0<\theta_{in}(\rvx,\rvz)\ll 1$, and the ReLU network defined in Eq.(\ref{eq:fully_connected_defi}). The model angle $\phi(\rvx,\rvz)$  is strictly greater than the input angle $\theta_{in}(\rvx,\rvz)$: 
\begin{equation}
    \phi(\rvx,\rvz)> \theta_{in}(\rvx,\rvz).
\end{equation}
with high probability of the random network initialization, if the network width $m=\Omega(1/\theta^2_{in})$.
\end{theorem}
The following corollary quantifies the better separation in this regime.
\begin{corollary}
    With the same setting as in Theorem \ref{thm:gradient_angle_approx} and with infinite width $m\to\infty$ but finite depth $L=\Omega(1/\theta_{in})$, $\cos \phi(\rvx,\rvz) = \left(1-\frac{L}{2\pi}\theta_{in} + o(\theta_{in})\right)\cos \theta_{in}.$
\end{corollary}

\begin{remark}[Separation in distance]
Indeed, the better angle separation discussed above implies a better separation in Euclidean distance as well. This can be easily seen by recalling from Lemma \ref{lemma:gradient_angle_expression} that the model gradient mapping $\nabla f$ preserves the norm (up to a universal factor $L+1$). 
\end{remark}
We also point out that, Figure \ref{fig:phi_plot} indicates that for large input angles (say $\theta_{in} > 30^{\circ}$)  the model gradient angle $\phi$ is always large (greater than $30^{\circ}$). Hence, non-similar data never become similar in the model gradient feature space.

\paragraph{Better separation in infinite width and depth limit.} Now, we consider the infinite width and depth case. We took the infinite width limit a prior, this technically leads to the infinite-width-then-depth limit. The following theorem shows that, no matter how similar two inputs originally are, as long as they are not parallel, their model gradient features eventually get wide separated in large depth.

\begin{theorem}\label{thm:infinite-depth-angle}
    Consider the ReLU neural network defined in Eq.(\ref{eq:fully_connected_defi}) and two non-parallel inputs $\rvx$ and $\rvz$, $\rvx\nparallel \rvz$. 
    In the infinite-width-then-depth limit, the model gradient angle $\phi(\rvx,\rvz)$ converges to a fix value $\arccos\frac{1}{4}$, regardless the input angle $\theta_{in}(\rvx,\rvz)$.
\end{theorem}
\begin{remark}
    The limit point value $\arccos\frac{1}{4}$ is about $75.5^\circ$, which means the inputs are quite well-separated in the model gradient feature space, as network depth increase to infinity. Recall that, without the nonlinear activation, $\phi(\rvx,\rvz)=\theta_{in}$, which can be arbitrarily small.
\end{remark}
It is interesting to observe that this limit point value is independent of the input angle, which means that the data points are mutually equally-separated in the limit. We will discuss its implications on NTK in the following sections.

\paragraph{Beyond ReLU activation.}
Here, we numerically verify that the better separation phenomenon obtained for ReLU above also holds for other nonlinear activation functions.  Figure \ref{fig:non_relu} shows the relations between the model gradient angle $\phi$ and the input angle $\theta_{in}$ for the following nonlinear activations: ReLU$^2$ (i.e., $\sigma(z)=\max\{0,z^2\}$), GeLU and tanh. One can easily see that the better separation holds: for relatively small input angles $\theta_{in}$ (e.g., $\theta_{in}\le30^{\circ}$), $\phi$ is always greater than $\theta_{in}$; and for deeper networks, $\phi$ is even greater. Interestingly, we observe that the gradient angle $\phi$ converges to $90^\circ$ for these activation functions indicating gradient features become orthogonal in the limit of $L\to\infty$, different from the $75.5^\circ$ that we obtain for ReLU networks.

We also show that the better separation generalizes beyond the NTK setting/regime. Please see Appendix \ref{sec:beyond_ntk} for more discussion.

\begin{figure}[t]
    % \centering
    \hspace{-13pt}
    \includegraphics[width=1.05\linewidth]{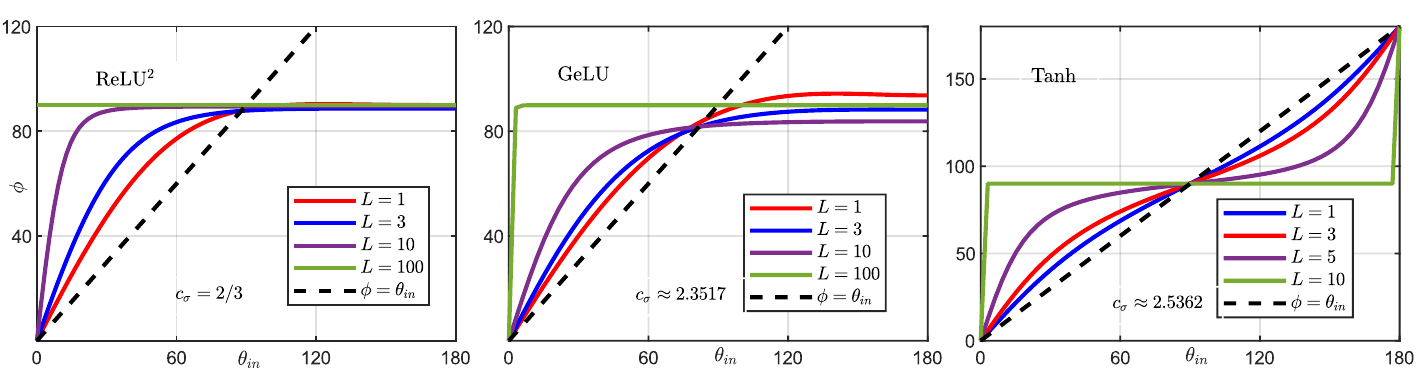}
    % \vspace{-2pt}
    \caption{Better separation for non-ReLU activation functions. {\bf Left}: ReLU$^2$, {\bf Middle}: GeLU, {\bf Right}: tanh. All plots are model gradient angle $\phi$ vs. input $\theta_{in}$. }
    \label{fig:non_relu}
    % \vspace{-2pt}
\end{figure}

\vspace{-5pt}
\section{Better NTK conditioning}\label{sec:condition_number}
\vspace{-5pt}
In this section, we show both theoretically and experimentally that, the  nonlinear activation induces a decrease in the NTK condition number $\kappa$. Moreover, a  neural network with larger depth $L$, which means more nonlinear activations in operation, the NTK condition number $\kappa$ is generically smaller. 

\vspace{-5pt}
\paragraph{Connection between condition number and model gradient angle.}
The smallest eigenvalue and condition number of NTK are closely related to the smallest model gradient angle $\min_{i,j\in[n]}\phi(\rvx_i,\rvx_j)$, through the gradient feature matrix $F$. 
Think about the case if $\phi(\rvx_i,\rvx_j)=0$ (i.e., $\nabla f(\rvx_i)$ is parallel to $\nabla f(\rvx_j)$) for some $i,j\in[n]$, then $F$, hence NTK $K$, is not full rank and the smallest eigenvalue $\lambda_{min}(K)$ is zero, leading to an infinite condition number $\kappa$. 
Similarly, if $\min_{i,j\in[n]}\phi(\rvx_i,\rvx_j)$ is small, the smallest eigenvalue $\lambda_{min}(K)$ is also small, and condition number $\kappa$ is large, as stated in the following proposition (see proof in Appendix \ref{secapp:pf_kappa_angle}).

\begin{proposition}\label{prop:kappa_angle}
Consider a $n\times n$ positive definite matrix $A = BB^T$, where matrix $B\in\mathbb{R}^{n\times d}$, with $d > n$, is of full row rank. Suppose that there exist $i,j\in[n]$ such that the angle $\phi$ between vectors $B_{i\cdot}$ and $B_{j\cdot}$ is small, i.e., $\phi \ll 1$, and that there exist constant $C>c>0$ such that $c\le \|B_{k\cdot}\|\le C$ for all $k\in[n]$.  Then, the smallest eigenvalue $\lambda_{min}(A) = O(\phi^2)$, and the condition number $\kappa = \Omega(1/\phi^2)$.
\end{proposition}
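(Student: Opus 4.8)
## Proof Proposal

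The plan is to exhibit a test vector in $\mathbb{R}^n$ on which the quadratic form $\rvv^T A \rvv = \|B^T \rvv\|^2$ is small, which bounds $\lambda_{min}(A)$ from above via the variational (Rayleigh quotient) characterization. The natural choice is the vector supported on the two coordinates $i$ and $j$ witnessing the small angle: take $\rvv = e_i - \|B_{i\cdot}\|/\|B_{j\cdot}\| \cdot e_j$ (or a symmetric normalized variant), so that $B^T \rvv = B_{i\cdot} - \frac{\|B_{i\cdot}\|}{\|B_{j\cdot}\|} B_{j\cdot}$, i.e., the difference of $B_{i\cdot}$ and the rescaling of $B_{j\cdot}$ to the same length.

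First I would compute $\|B^T \rvv\|^2$ explicitly. Writing $a = \|B_{i\cdot}\|$, $b = \|B_{j\cdot}\|$, and using $\langle B_{i\cdot}, B_{j\cdot}\rangle = ab\cos\phi$, one gets $\|B_{i\cdot} - (a/b) B_{j\cdot}\|^2 = a^2 - 2(a/b)(ab\cos\phi) + (a/b)^2 b^2 = 2a^2(1 - \cos\phi)$. Since $1 - \cos\phi \le \phi^2/2$ for small $\phi$, this is at most $a^2 \phi^2 \le C^2 \phi^2$. Meanwhile $\|\rvv\|^2 = 1 + a^2/b^2 \ge 1$, and $\|\rvv\|^2 \le 1 + C^2/c^2 = O(1)$. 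Therefore the Rayleigh quotient gives $\lambda_{min}(A) \le \|B^T\rvv\|^2 / \|\rvv\|^2 \le C^2\phi^2 / 1 = O(\phi^2)$.

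Next, for the condition number, I would lower-bound $\lambda_{max}(A)$ by a constant independent of $\phi$: for instance $\lambda_{max}(A) \ge \max_k \|B_{k\cdot}\|^2 \ge c^2$, since $e_k^T A e_k = \|B_{k\cdot}\|^2$. (Alternatively, $\lambda_{max}(A) \ge \frac{1}{n}\operatorname{tr}(A) \ge c^2$.) Combining, $\kappa = \lambda_{max}(A)/\lambda_{min}(A) \ge c^2 / (C^2 \phi^2) = \Omega(1/\phi^2)$. The positive-definiteness / full-row-rank hypothesis on $B$ is only needed to ensure $A$ is genuinely invertible so that $\kappa$ and $\lambda_{min}$ are well-defined and positive; it plays no other role in the bounds.

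The calculations here are entirely routine; the only thing requiring a moment's care is the bookkeeping of constants hidden in the $O(\cdot)$ and $\Omega(\cdot)$ notation — in particular making sure the $\|\rvv\|^2$ denominator is bounded \emph{below} by a positive constant (so it does not secretly shrink and spoil the upper bound on $\lambda_{min}$) and that $\lambda_{max}$ is bounded \emph{below} by a positive constant (so the ratio for $\kappa$ does not collapse). The norm bounds $c \le \|B_{k\cdot}\| \le C$ are exactly what supplies both. There is no genuine obstacle; this is a short variational argument with a well-chosen test vector.
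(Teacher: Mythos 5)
Your proposal is correct and follows essentially the same route as the paper's proof: both use the Rayleigh quotient with a test vector supported on coordinates $i$ and $j$ (your $e_i - (\|B_{i\cdot}\|/\|B_{j\cdot}\|)e_j$ is a scalar multiple of the paper's choice), yielding $\|B^T\rvv\|^2 \propto 1-\cos\phi = O(\phi^2)$, and both lower-bound $\lambda_{max}(A)$ via the diagonal entries (equivalently the trace) using the norm bounds. Your bookkeeping of the constants is, if anything, slightly more careful than the paper's.
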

Therefore, a good data angle separation in the model gradient features, i.e., $\min_{i,j\in[n]}\phi(\rvx_i,\rvx_j)$ not too small, is a necessary condition such that the condition number $\kappa$ is not too large. 
As is shown in the last section, the ReLU nonlinearity  makes the samples more separated when mapped from the input data space to the model gradient feature space. Hence, it is expected that the NTK condition number will decrease in the presence of the ReLU nonlinearity.

\paragraph{Smaller NTK condition number.}
Theoretically, we consider the infinite width limit. We require that the dataset is not degenerated, i.e., $\rvx_i \nparallel \rvx_j$ for all $i,j$. This is a mild and commonly used setting in the literature, see for example \cite{du2018gradient}.
We require that the weights of the first layer $W^{(1)}$ be trainable and fix the other layers
in the following theorem. This is also a common setting in literature to simplify the analysis \cite{du2018gradient}.
\begin{theorem}\label{thm:shallow_condition_number}
    Consider the ReLU network in Eq.(\ref{eq:defi_shallow_nn}) in the limit $m\to\infty$ and at initialization. Let the weights of the first layer $W^{(1)}$ be trainable and fix the other layers. We compare the two scenarios: (a) the network with ReLU activation and (b) the network with all the ReLU activation removed. The smallest eigenvalue $\lambda_{min}(K)$ of its NTK in scenario (a) is larger than that of scenario (b): $\lambda_{min}(K_a) > \lambda_{min}(K_b)$, and the NTK condition number $\kappa$ in scenario (a) is less than that in scenario (b): $\kappa_a < \kappa_b$. Moreover, for two ReLU neural networks $f_1$ of depth $L_1$ and $f_2$ of  depth $L_2$ 
 with $L_1 > L_2$, we have $\kappa_{f_1} < \kappa_{f_2}$.
\end{theorem}
This theorem confirms the expectation that the NTK condition number $\kappa$ should be decreased, as a consequence of the existence of the ReLU nonlinearity. 
This theorem also shows that the depth of the ReLU network enhances this better NTK conditioning.

The high-level intuition behind the proof of this theorem is that: the derivative of the ReLU function, $\sigma'(z) = \mathbb{I}_{\{z\ge 0\}}$, resembles a binary gate which has \emph{open} and \emph{close} states.  
When ReLU are implemented, the model gradient map $\nabla f: \rvx \mapsto \nabla f(x)$ increases the directional diversity of the vectors $\nabla f(x)$, due to the high dimension of the model gradient space and the different activation patterns of the hidden layer for different samples $\rvx$. Hence, it is expected that the feature matrix $F$, as well as the NTK matrix $K$, is better conditioned.

In fact, fixing the weights of the top layer is not necessary and can be removed. We relax this requirement in Appendix \ref{sec:relax_top_layer}. In our experiments in Section \ref{sec:experiments_ev} where all layers are trainable, we observe the phenomena of \emph{better separation} and \emph{better NTK conditioning}. 

\paragraph{NTK condition number in infinite depth.} 
As a consequence of the pairwise equal-separation result (Theorem \ref{thm:infinite-depth-angle}), the NTK matrix got simplified in the infinite depth limit. The following theorem shows that the NTK condition number converges to a fixed value $\frac{n+4}{3}$, which is independent of the data distribution.
\begin{theorem}\label{thm:infi-depth-ntk}
    Consider the ReLU neural network defined in Eq.(\ref{eq:fully_connected_defi}) and a dataset $\mathcal{D}=\{(\rvx_i,y_i)\}_{i=1}^n$. Suppose that all data inputs are normalized $\|\rvx_i\|=1$ for all $i$, and $\rvx_i \nparallel \rvx_j$ for all $i\ne j$.
    In the infinite-width-then-depth limit, the NTK condition number $\kappa$ converges to $\frac{n+4}{3}$.
\end{theorem}

\begin{figure}[t]
\centering
% \begin{subfigure}[b]{0.45\textwidth}
% \hspace{-5pt}
\includegraphics[width=0.49\textwidth]{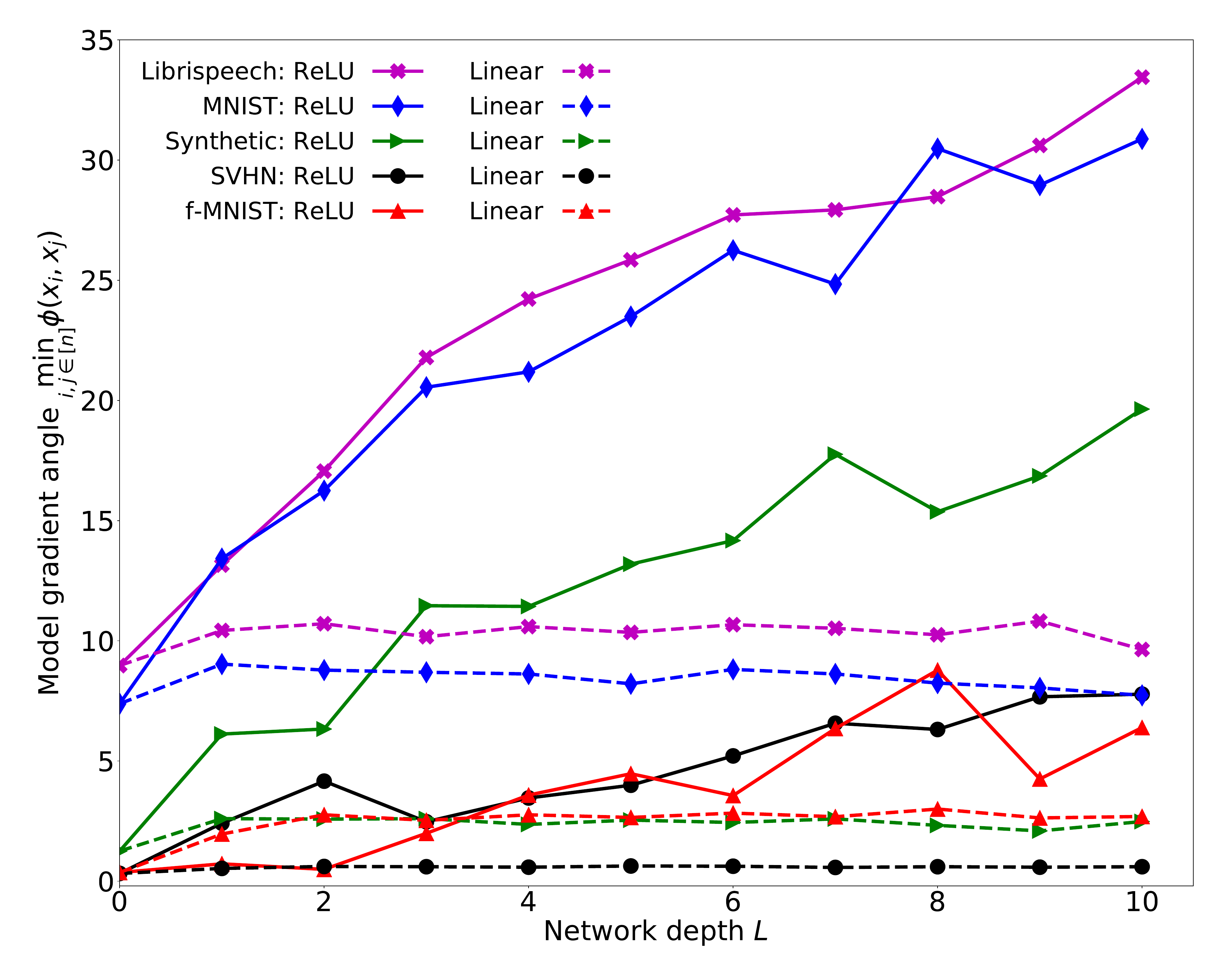} 
% \subcaption{Minimum $\phi$ (in degrees $^\circ$) vs. depth}
% \end{subfigure}
% \begin{subfigure}[b]{0.45\textwidth}
\includegraphics[width=0.49\textwidth]{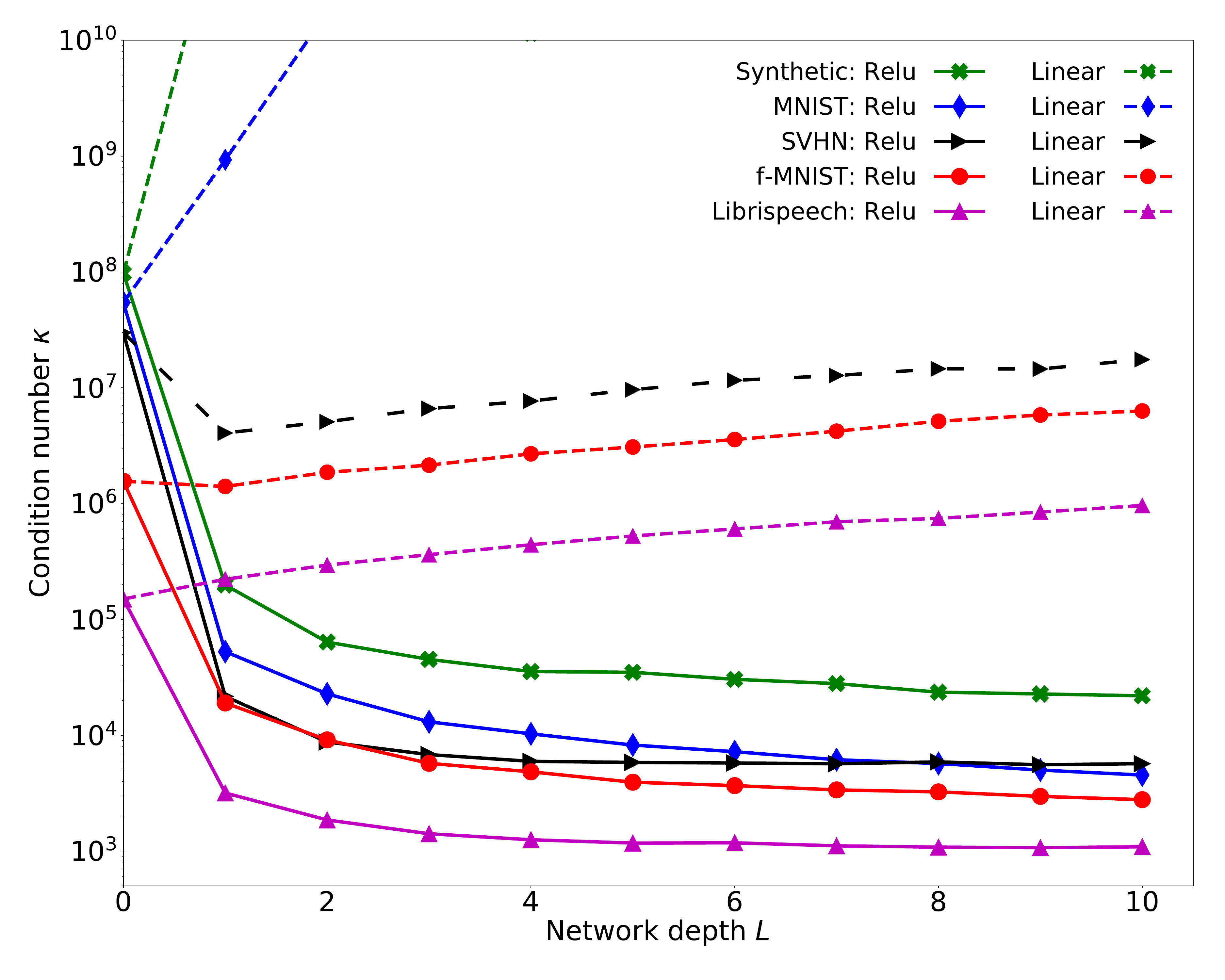} 
% \subcaption{NTK condition number vs. depth}
% \end{subfigure}
\caption{\textbf{Better separation (left)  and Better NTK conditioning (right) of ReLU network} on various datasets. Solid lines are of ReLU networks, dashed lines are of linear neural networks for comparison.
\textbf{Left:}  Minimum $\phi$ (in degrees $^\circ$) vs. depth. ReLU network has better separation of model gradient feature as depth increases. \textbf{Right:} NTK condition number vs. depth.  ReLU network has better conditioning of NTK as depth increases. Note that $L=0$  corresponds to the case of a linear model and a linear neural network, and the NTK in this case is the Gram matrix.}
\label{fig:angle_relu_plot}
\end{figure}

\subsection{Experimental evidence}\label{sec:experiments_ev}
Here, we experimentally show that  better separation and better conditioning happen in practice.

\paragraph{Dataset.} We use the following datasets: synthetic dataset, 
 MNIST \citep{lecun1998gradient}, FashionMNIST (f-MNIST) \citep{xiao2017fashion}, SVHN \citep{netzer2011reading} and Librispeech \citep{panayotov2015librispeech}. The synthetic data consists of $2000$ samples which are randomly drawn from a 5-dimensional Gaussian distribution with zero-mean and unit variance. The MNIST, f-MNIST and SVHN datasets are image datasets where each input is an image.
 % , and  the inputs are normalized with zero mean and $1$ variance. 
 The Librispeech is a speech dataset including $100$ hours of clean speeches. In the experiments, we use a subset of Librispeech with $50,000$ samples, and each input is a 768-dimensional vector representing a frame of speech audio and we follow \citep{hui2020evaluation} for the feature extraction.

\begin{figure*}
    \centering
    \includegraphics[width=\textwidth]{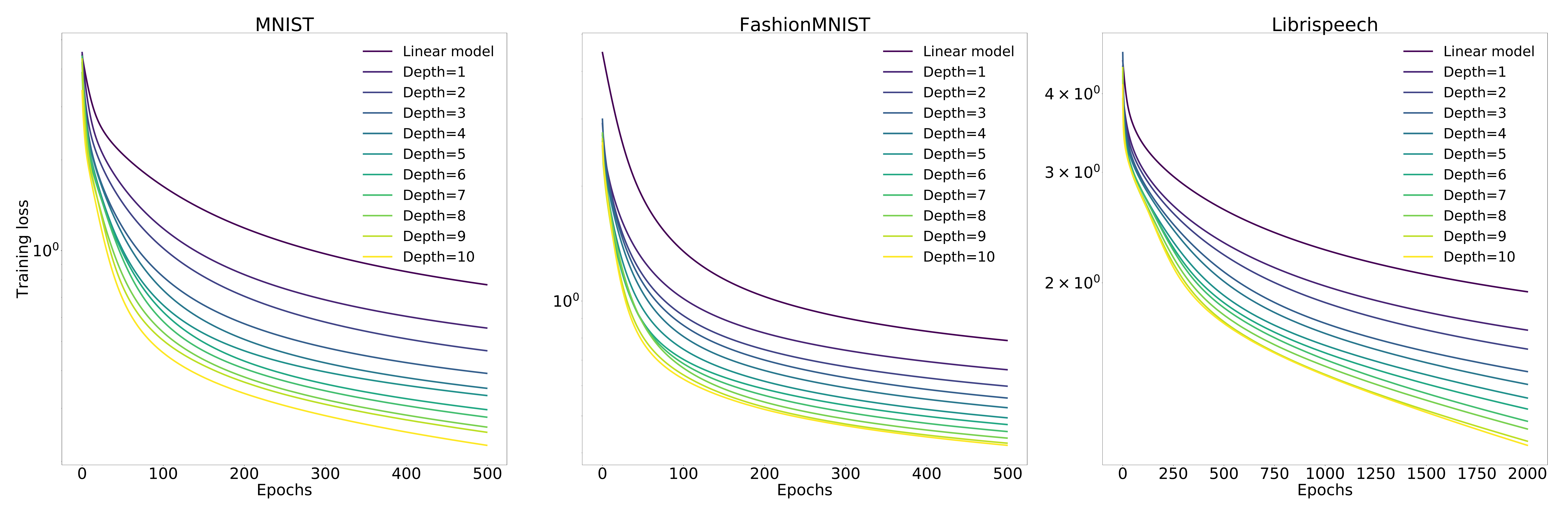} 
    \caption{\textbf{Training curve of ReLU networks with different depths.} On each of these datasets, we see that deeper ReLU network always converges faster than shallower ones.}
    \label{fig:mnist_conv_plot}
    \vspace{-12pt}
\end{figure*}

\paragraph{Models.} For each of the datasets, we use a ReLU activated fully-connected neural network architecture to process. The ReLU network has $L$ hidden layers, and has $512$ neurons in each of its hidden layers. The ReLU network uses the NTK parameterization and initialization strategy (see \citep{jacot2018neural}). For each dataset, we vary the network depth $L$ from $0$ to $10$. Note that $L=0$ corresponding to the linear model case. In addition, for comparison, we use a linear neural network, which has the same architecture with the ReLU network except the absence of activation function.

\paragraph{Results.} For each dataset and given network depth $L$, we evaluate both the smallest pairwise model gradient angle $\min_{i,j\in[n]}\phi(\rvx_i,\rvx_j)$ and the NTK condition number $\kappa$, at the network initialization. We take $5$ independent runs over $5$ random initialization seeds, and report the average. In each run, we used a A-100 GPU to compute the NTK, which took $4\sim 10$ hours.  The results are shown in Figure \ref{fig:angle_relu_plot}.
We compare the two scenarios of \emph{with} and \emph{without} the ReLU activation function.  As one can easily see from the plots, a ReLU network (depth $L=1,2,\cdots, 10$) always have a better separation of data features (i.e., larger smallest pairwise model gradient angle), and a better NTK conditioning (i.e., smaller NTK condition number), than its corresponding linear network (compare the solid line and dash line of the same color). Furthermore, the monotonically decreasing NTK condition number shows that a deeper ReLU network have a better conditioning of NTK.

\section{Optimization acceleration}\label{sec:optimization}
Recently studies have shown strong connections between the NTK condition number and the theoretical convergence rate of gradient descent algorithms on wide neural networks \citep{du2018gradient,du2018gradientdeep,soltanolkotabi2018theoretical,allen2019convergence,zou2020gradient,oymak2020toward,liu2022loss}. In \cite{du2018gradient,du2018gradientdeep,liu2022loss},  the worst-case convergence rate has been shown to be
\begin{equation}
    L(\rvw_t) \le (1-\kappa^{-1})^t L(\rvw_0).
\end{equation}
Although $\kappa$ is evaluated on the entire optimization path, all these theories used the fact that NTK is almost constant for wide neural networks and an evaluation at initialization $\rvw_0$ is enough. 

As a smaller NTK condition number (or larger smallest eigenvalue of NTK) implies a faster worst-case convergence rate, our findings suggest that: (a), the ReLU activation function helps improve the worst-case convergence rate of gradient descent, and (b), deeper wide ReLU networks have faster convergence rate than shallower ones.

We experimentally verify this implication. Specifically, we train the ReLU networks, with depth $L$ ranging from $1$ to $10$, for the datasets MNIST, f-MNIST, and Librispeech. For all training tasks, we use cross-entropy loss as the objective function and use mini-batch stochastic gradient descent (SGD) of batch size $500$ to optimize. For each task, we find its optimal learning rate by grid search. On MNIST and f-MNIST, we train $500$ epochs, and on Librispeech, we training $2000$ epochs. 

The curves of training loss against epochs are shown in Figure \ref{fig:mnist_conv_plot}. We observe that, for all these datasets, a deeper ReLU network always converges faster than a shallower one. This is consistent with the theoretical prediction that the deeper ReLU network, which has a smaller NTK condition number, has a faster theoretical convergence rate.

\paragraph{Trade-off between optimization and generalization.} 
Although a faster convergence in terms of number of iterations for deep networks, as Theorem \ref{thm:infinite-depth-angle} suggests, in the extreme case of infinite depth $L\to\infty$, any non-parallel input pairs become equally separated in gradient features regardless of their original similarity. Even though not mutually orthogonal, this could also result in a trivial generalization: close to random guess for unseen data. The same consequence can also be obtained from \cite{huang2020deep}, where they dropped the initial random guess value and obtained a zero prediction for unseen data. 

As for finite depth, it is theoretically hard to predict at what depth this trade-off starts to happen. Under the same experimental setting as in Figure \ref{fig:mnist_conv_plot}, Table \ref{table:trade-off} shows that the generalization performance starts to decrease at depth $L=8$, suggesting a optimization-generalization trade-off for large depth.
\vspace{-5pt}
\begin{table}[h]
\caption{{\bf Generalization dependence on ReLU network depth $L$.}  Test accuracies are reported after training convergence on MNIST.}\label{table:trade-off}
\centering
\vspace{5pt}
\begin{tabular}{c||c c c c c c}
\hline
Depth $L$ & 1 & 3 & 6 & 8 & 10 & 12 \\
\hline
test accuracy $(\%)$ &  95.98 & 97.43 & 97.57 & 97.52 & 97.39 & 97.19 \\
\hline
\end{tabular}
\end{table}

\section{Conclusion and discussions}\label{sec:conclusion}
In this work, we showed the effects of  nonlinear activation on better  separation of similar data in feature space and on the NTK conditioning.  
 We also showed that more sequential activation operations, i.e., larger  network depth, amplifies  these effects. As the NTK conditioning is closely related to theoretical convergence rate of gradient descent, our findings also suggest a positive role  of activation functions in 
optimization theories. 
 A limitation of the paper is that the theoretical analysis is only conducted on ReLU activation, although results have been empirically verified for other nonlinear activations. For other activations, the analysis requires analytical expressions for integrations involved, which requires a distinct type of analysis and we consider it as a future work.
% {\color{red} discussion of limitations}

\printbibliography
%%%%%%%%%%%%%%%%%%%%%%%%%%%%%%%%%%%%%%%%%%%%%%%%%%%%%%%%%%%%
\newpage
\appendix

\section{Properties of function $g$}\label{secapp:fun_g}
 Recall that the function $g: [0,\pi) \to [0,\pi)$ is defined as  (see Lemma \ref{lemma:feat_angle_relation})
\begin{equation}\label{eq:g_function}
g(z)=\arccos \left(\frac{\pi-z}{\pi}\cos z + \frac{1}{\pi}\sin z\right),
\end{equation}
Figure \ref{fig:g_function} shows the plot of this function. From the plot, we can easily find the following properties.
\begin{figure}[h]
    \centering
    \includegraphics[width=0.5\textwidth]{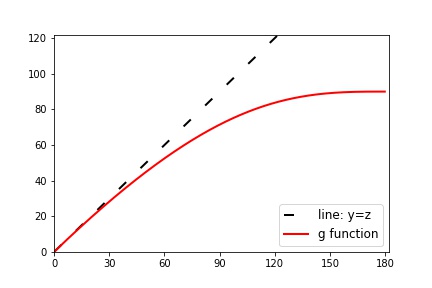}
    \caption{Curve of the function $g(\theta)$. As can be seen, $g(\theta)$ is monotonic, and is approximately the identity function $y=\theta$ in the small angle region ($\theta \ll 90^\circ$).}
    \label{fig:g_function}
\end{figure}

 \begin{proposition}[Properties of $g$]\label{prop:g_func}
The function $g$ defined in Eq.(\ref{eq:g_function}) has the following properties:
\begin{enumerate}
    \item $g$ is a monotonically increasing function;
    \item $g(z) \le z$, for all $z\in [0,\pi)$; and $g(z)=z$ if and only if $z=0$;
    \item for any $z\in [0,\pi)$,  the sequence $\{g^l(z)\}_{l=1}^\infty$ is monotonically decreasing, and has the limit  $\lim_{l\to \infty} g^l (z) = 0$. 
\end{enumerate}
\end{proposition}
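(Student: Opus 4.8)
The plan is to work throughout with the auxiliary function $h(z)\triangleq\frac{\pi-z}{\pi}\cos z+\frac{1}{\pi}\sin z$, so that $g(z)=\arccos h(z)$, and to exploit that $\arccos:[-1,1]\to[0,\pi]$ is a strictly decreasing bijection. First I would compute the derivative
\begin{equation}
h'(z)=-\sin z+\frac{1}{\pi}\bigl(z\sin z\bigr)=-\frac{\pi-z}{\pi}\sin z,
\end{equation}
which is strictly negative on $(0,\pi)$ because $\pi-z>0$ and $\sin z>0$ there. Combined with $h(0)=1$ and $\lim_{z\to\pi^-}h(z)=0$, this shows $h$ maps $[0,\pi)$ strictly decreasingly onto $(0,1]$. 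In particular $h(z)\in[-1,1]$, so $g$ is well-defined, and $h(z)\ge 0$ forces $g(z)\in[0,\tfrac{\pi}{2}]$ for all $z\in[0,\pi)$. Since both $\arccos$ and $h$ are strictly decreasing, $g=\arccos\circ\,h$ is strictly increasing, which is Property 1.

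For Property 2, I would split on the value of $z$. When $z\in(\tfrac{\pi}{2},\pi)$ we get $g(z)\le\tfrac{\pi}{2}<z$ for free, with equality impossible. When $z\in[0,\tfrac{\pi}{2}]$, both $g(z)$ and $z$ lie in $[0,\tfrac{\pi}{2}]$, where $\cos$ is strictly decreasing, so $g(z)\le z$ is equivalent to $\cos g(z)=h(z)\ge\cos z$, i.e. to
\begin{equation}
h(z)-\cos z=\frac{1}{\pi}\bigl(\sin z-z\cos z\bigr)\ge 0.
\end{equation}
On $(0,\tfrac{\pi}{2}]$ this follows (after dividing by $\cos z>0$ on $(0,\tfrac{\pi}{2})$, and directly at $z=\tfrac{\pi}{2}$) from the elementary inequality $\tan z>z$; at $z=0$ the right-hand side vanishes. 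Tracking the two cases, the only equality $g(z)=z$ occurs at $z=0$, giving Property 2.

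For Property 3, monotonicity of the orbit is immediate from Property 2: $g^{l+1}(z)=g\bigl(g^{l}(z)\bigr)\le g^{l}(z)$, so $\{g^{l}(z)\}_{l\ge 1}$ is non-increasing and bounded below by $0$, hence converges to some $L^\star\in[0,\tfrac{\pi}{2}]$. Since $h$ is continuous with image in $[-1,1]$ and $\arccos$ is continuous on $[-1,1]$, $g$ is continuous; passing to the limit in $g^{l+1}(z)=g(g^{l}(z))$ yields $L^\star=g(L^\star)$, and the equality clause of Property 2 then forces $L^\star=0$.

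The argument is largely bookkeeping once the identity $h'(z)=-\frac{\pi-z}{\pi}\sin z$ is established; the hardest parts are minor: verifying that $h$ never leaves $[-1,1]$ (so that $g$ and the final fixed-point step make sense) and handling the equality case uniformly on $[0,\tfrac{\pi}{2}]$ versus $(\tfrac{\pi}{2},\pi)$. The only external ingredient is $\tan z>z$ on $(0,\tfrac{\pi}{2})$, itself proved by observing $(\tan z-z)'=\tan^2 z>0$.
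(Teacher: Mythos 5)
Your proof is correct and follows essentially the same route as the paper's: the same auxiliary function $h=\tilde g$, the same derivative computation, and the same reduction to the inequality $\tan z>z$. The only differences are minor refinements --- you dispatch $z\in(\tfrac{\pi}{2},\pi)$ by noting that $h>0$ forces $g(z)<\tfrac{\pi}{2}<z$, where the paper instead tracks the signs of $\cos z$ and $\tan z-z$ on that interval, and your Part 3 makes the continuity/fixed-point step explicit where the paper leaves it implicit.
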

\begin{proof}
    \textbf{Part $1$.} 
    First, we consider the auxiliary function 
    $\tilde{g}(z) = \frac{\pi-z}{\pi}\cos z + \frac{1}{\pi}\sin z$. 
    We see that
    \begin{equation*}
        \frac{d \tilde{g}(z)}{d z} = -\left(1-\frac{z}{\pi}\right)\sin z \le 0, ~~ \forall z \in [0,\pi).
    \end{equation*}
    Hence, $\tilde{g}(z)$ is monotonically decreasing on $[0,\pi)$. Combining with the monotonically decreasing nature of the $\arccos$ function, we get that $g$ is monotonically increasing.
    
    \textbf{Part $2$.} It suffices to prove that $\cos z \le \tilde{g}(z)$ and that the equality holds only at $z=0$. 
    For $z=0$, it is easy to check that $\cos z = \tilde{g}(z)$, as both $z$ and $\sin z$ are zero.
    For $z\in (0,\pi/2)$, noting that $\tan z - z > 0$, we have
    \begin{align}
        \tilde{g}(z) = \frac{\pi-z}{\pi}\cos z + \frac{1}{\pi}\sin z = \cos z + \frac{1}{\pi}\left(-z + \tan z\right)\cos z > \cos z.\label{eq:pf_tilde_g_z}
    \end{align}
    For $z = \pi/2$, we have $\cos \pi/2 = 0 < 1/\pi = \tilde{g}(\pi/2)$. For $z\in (\pi/2,\pi)$, we have the same relation as in Eq.(\ref{eq:pf_tilde_g_z}). The only differences are that, in this case, $\cos z < 0$ and $\tan z - z < 0$. Therefore, we still get $\tilde{g}(z)> \cos z$ for $z\in (\pi/2,\pi)$.

     \textbf{Part 3.} From part $2$, we see that $g(z) < z$ for all $z\in (0,\pi)$. Hence, for any $l$, $g^{l+1}(z) < g^l(z)$. Moreover, since $z=0$ is the only fixed point such that $g(z)=z$, in the limit $l\to\infty$, $g^l(z) \to 0$. 
\end{proof}
It is worth to note that the last property of $g$ function immediately implies the collapse of embedding vectors from different inputs in the infinite depth limit $L\to\infty$. This embedding collapse has been observed in prior works \cite{poole2016exponential, schoenholz2016deep} (although by different type of analysis) and has been widely discussed in the literature of Edge of Chaos. 
\begin{theorem}\label{thm:app_theta_l}
Consider a ReLU neural network.
Given any two inputs $\rvx, \rvz\in\mathbb{R}^d$, the sequence of angles $\{\theta^{(l)}(\rvx,\rvz)\}_{l=1}^L$ between their $l$-embedding vectors $\alpha^{(l)}(\rvx)$ and $\alpha^{(l)}(\rvz)$, is monotonically decreasing. Moreover, in the limit of infinite depth, \begin{equation}
\lim_{L\to\infty}\theta^{(L)}(\rvx,\rvz) = 0,
\end{equation}
and there exists a vector $\alpha$ such that, for any input $\rvx$, the last layer $L$-embedding 
\begin{equation}
\alpha^{(L)}(\rvx) = \|\rvx\| \alpha.
\end{equation}
\end{theorem}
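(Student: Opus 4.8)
The plan is to reduce the entire statement to the scalar dynamics of the angle map $g$ of Eq.(\ref{eq:g_function}), whose behaviour is already characterised by Corollary \ref{cor:feature_angle_exact} (which supplies $\theta^{(l)}(\rvx,\rvz)=g^l(\theta_{in}(\rvx,\rvz))$ together with the norm identity $\|\alpha^{(l)}(\rvx)\|=\|\rvx\|$) and by Proposition \ref{prop:g_func} (monotonicity of $g$, the contraction $g(z)\le z$, and $g^l(z)\to 0$). Given these, the first two assertions are essentially immediate, and only the existence of the common limiting direction $\alpha$ calls for a short separate argument.

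First I would handle the monotonicity and the vanishing of the embedding angle. By Corollary \ref{cor:feature_angle_exact}, $\theta^{(l)}(\rvx,\rvz)=g^l(\theta_{in}(\rvx,\rvz))$ with $\theta_{in}(\rvx,\rvz)\in[0,\pi]$. A one-line check from the recursion Eq.(\ref{eq:feat_angle_relation}) -- namely that $(1-\theta_{in}/\pi)\cos\theta_{in}+\tfrac1\pi\sin\theta_{in}$ is nonnegative on $[0,\pi]$ -- shows $\theta^{(1)}(\rvx,\rvz)\in[0,\pi/2]$, so from the first hidden layer onward the angle lies in the domain $[0,\pi)$ of $g$. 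Proposition \ref{prop:g_func}(2) then gives $\theta^{(l+1)}=g(\theta^{(l)})\le\theta^{(l)}$ for every $l\ge1$, i.e. $\{\theta^{(l)}(\rvx,\rvz)\}_{l=1}^{L}$ is monotonically decreasing, and Proposition \ref{prop:g_func}(3) gives $\lim_{L\to\infty}\theta^{(L)}(\rvx,\rvz)=\lim_{l\to\infty}g^l(\theta^{(1)}(\rvx,\rvz))=0$.

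Next I would establish the collapse of the embeddings. Normalising, write $\hat\alpha^{(L)}(\rvx):=\alpha^{(L)}(\rvx)/\|\rvx\|$, which is a unit vector by the norm identity; fix a reference input $\rvx_0$ and set $\beta^{(L)}:=\hat\alpha^{(L)}(\rvx_0)$, noting that $\alpha^{(L)}(\rvx_0)$ and $\alpha^{(L)}(\rvx)$ live in the same space $\mathbb{R}^{m_L}$ so this comparison is legitimate. Expanding the square and using the norm identity together with the definition of the embedding angle,
\[
\big\|\hat\alpha^{(L)}(\rvx)-\beta^{(L)}\big\|^{2}=2-2\cos\theta^{(L)}(\rvx,\rvx_0)=2\bigl(1-\cos g^{L}(\theta_{in}(\rvx,\rvx_0))\bigr)\longrightarrow 0\quad(L\to\infty),
\]
by the previous step. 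Hence in the infinite-depth limit every normalised embedding $\hat\alpha^{(L)}(\rvx)$ coincides with the single unit vector $\beta^{(L)}$; calling this common limiting direction $\alpha$ yields $\alpha^{(L)}(\rvx)=\|\rvx\|\,\alpha$ for every input $\rvx$, which is the last claim. Equivalently -- and this reading is cleaner given that one is also taking $m\to\infty$ -- the same display shows the limiting covariance $\langle\alpha^{(L)}(\rvx),\alpha^{(L)}(\rvz)\rangle\to\|\rvx\|\,\|\rvz\|$, i.e. the limiting feature map is rank one.

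The main obstacle is conceptual rather than computational: because the embedding spaces $\mathbb{R}^{m_L}$ change with $L$ (on top of the width limit $m\to\infty$), one must decide what ``there exists a vector $\alpha$'' should mean, and the honest interpretation is ``the direction of $\alpha^{(L)}(\rvx)$ ceases to depend on $\rvx$ as $L\to\infty$'', which is exactly what the distance estimate above delivers; phrasing the conclusion through the rank-one limit of the covariance kernel sidesteps the issue of literally transporting a fixed vector across spaces. Everything else is bookkeeping: the monotonicity, the limit $0$, and the norm identities are direct consequences of Corollary \ref{cor:feature_angle_exact} and Proposition \ref{prop:g_func}, the only genuine computation being the nonnegativity of the single trigonometric expression that confines $\theta^{(1)}$ to $[0,\pi/2]$.
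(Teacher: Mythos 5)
Your proposal is correct and follows exactly the route the paper intends: the paper offers no separate proof of this theorem, presenting it as an immediate consequence of Corollary \ref{cor:feature_angle_exact} together with Proposition \ref{prop:g_func}, which is precisely the reduction you carry out. You are in fact more careful than the paper on two points it glosses over --- confining $\theta^{(1)}$ to $[0,\pi/2]$ so that the iteration of $g$ is well defined even when $\theta_{in}=\pi$, and turning the ``collapse'' claim into an actual estimate $\|\hat\alpha^{(L)}(\rvx)-\hat\alpha^{(L)}(\rvx_0)\|^2=2\bigl(1-\cos\theta^{(L)}(\rvx,\rvx_0)\bigr)\to 0$ rather than a bare assertion.
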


\begin{figure}[t]
    \centering
    \includegraphics[width=1.05\linewidth]{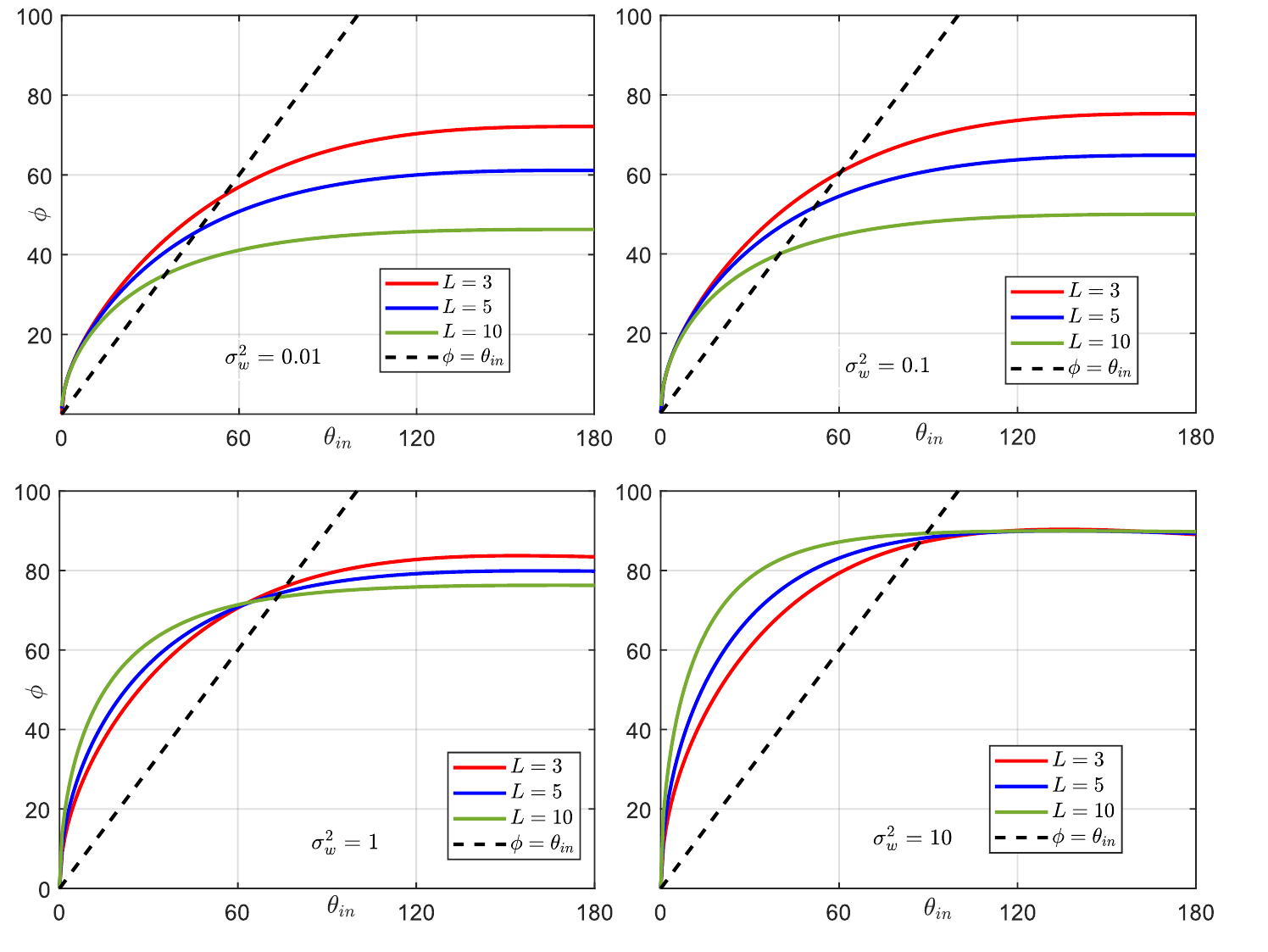}
    \caption{Model gradient angle $\phi$ vs. input $\theta_{in}$ in different scaling regimes $\sigma_\rvw^2=0.01, ~0.1, ~1$ and $10$. The better feature separation always holds for similar data (when $\theta_{in}$ is small, left end of each plot).}
    \label{fig:fig2}
\end{figure}
\section{Beyond the NTK regime} \label{sec:beyond_ntk}
Here, we show that the better separation phenomenon still holds outside of the NTK regime. We consider different initialization scales $\rvw\sim \mathcal{N} (0,\sigma_\rvw^2)$. Note that $\sigma_{\rvw}^2 < 1$ corresponds to small initialization. 
Figure \ref{fig:fig2} plots the model gradient angle $\phi$ as a function of the input $\theta_{in}$, for different scaling regimes: $\sigma_\rvw^2=0.01, ~0.1, ~1$ and $10$. It shows that the better separation phenomenon still holds for similar inputs at various network depths. 

\section{Proof of Proposition \ref{prop:kappa_angle}}\label{secapp:pf_kappa_angle}
\begin{proof}
Consider the matrix $B$ and the $n$ vectors $\rvb_k \triangleq B_{k\cdot}$, $k\in[n]$. The smallest singular value square of matrix $B$ is defined as
\begin{equation*}
    \sigma_{min}^2(B) = \min_{\rvv \ne 0}\frac{\rvv^T B B^T\rvv}{\rvv^T\rvv} = \min_{\rvv \ne 0} \frac{\|\sum_{k}v_k \rvb_k\|^2}{\|\rvv\|^2}.
\end{equation*}
Since the angle $\phi$ between $\rvb_i = B_{i\cdot}$ and $\rvb_j = B_{j\cdot}$ is small, 
let $\rvv'$ be the vector such that $v'_i = \|\rvb_j\|$, $v'_j = -\|\rvb_i\|$ and $v'_k = 0$ for all $k\ne i,j$. Then
\begin{align*}
    \sigma_{min}^2(B) &\le \frac{\|\sum_{k}v'_k \rvb_k\|^2}{\|\rvv'\|^2} = \left\|\frac{\|\rvb_j\|}{\sqrt{\|\rvb_i\|^2 + \|\rvb_j\|^2}}\rvb_i - \frac{\|\rvb_i\|}{\sqrt{\|\rvb_i\|^2 + \|\rvb_j\|^2}}\rvb_j \right\|^2 \\
    & = \frac{2\|\rvb_i\|^2\|\rvb_j\|^2}{\|\rvb_i\|^2 + \|\rvb_j\|^2}(1-\cos \phi) \\
    &= \frac{\|\rvb_i\|^2\|\rvb_j\|^2}{\|\rvb_i\|^2 + \|\rvb_j\|^2} \phi^2 + O(\phi^4).
\end{align*}
Since $A=BB^T$, the smallest eigenvalue $\lambda_{min}(A)$ of $A$ is the same as $\sigma_{min}^2(B)$.

On the other hand, the largest eigenvalue $\lambda_{max}(A)$ of matrix $A$ is lower bounded by tr$(A)/n$. Note that the diagonal entries $A_{kk} = \|\rvb_k\|$. Hence, $c\le \lambda_{max}(A) \le C$.
Therefore, the condition number $\kappa = \lambda_{max}(A)/\lambda_{min}(A)  = \Omega(1/\phi^2)$.
\end{proof}

\section{Proofs of Theorems without (ReLU) activation}
\subsection{Proof of Theorem \ref{thm:linear_nn_angles}}\label{secapp:pf_thm_linear_nn_angles}
\begin{proof}
First of all, we provide a useful lemma. 
\begin{lemma}\label{lemma:rand_m}
Consider a matrix $A\in\mathbb{R}^{m\times d}$, with each entry of $A$ is i.i.d. drawn from $\mathcal{N}(0,1)$. In the limit of $m\to\infty$, 
\begin{equation}
  \frac{1}{m}A^TA \to I_{d\times d}, ~~ \textrm{in  probability.}
\end{equation}
\end{lemma}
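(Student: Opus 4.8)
The plan is to prove the statement entrywise and then assemble it into matrix convergence, exploiting that $d$ is a \emph{fixed} finite number so that there are only finitely many entries to control. Write $B \triangleq \frac{1}{m}A^TA$, so that for each pair $i,j\in[d]$ we have $B_{ij} = \frac{1}{m}\sum_{k=1}^m A_{ki}A_{kj}$. The first observation is that for fixed $(i,j)$ the summands $\{A_{ki}A_{kj}\}_{k=1}^m$ are i.i.d.\ across $k$ (since distinct rows of $A$ are independent), with mean $\mathbb{E}[A_{ki}A_{kj}] = \delta_{ij}$ — equal to $1$ when $i=j$ because $A_{ki}\sim\mathcal{N}(0,1)$, and equal to $0$ when $i\ne j$ by independence of the two coordinates within a row — and with finite variance, namely $\mathrm{Var}(A_{ki}^2) = \mathbb{E}[A_{ki}^4]-1 = 2$ in the diagonal case and $\mathrm{Var}(A_{ki}A_{kj}) = \mathbb{E}[A_{ki}^2]\mathbb{E}[A_{kj}^2] = 1$ in the off-diagonal case. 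So each entry of $B$ is an empirical average of i.i.d.\ random variables whose mean is the corresponding entry of $I_{d\times d}$ and whose variance is at most $2$.

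Next I would apply the weak law of large numbers to each entry; Chebyshev's inequality does the job and even yields a rate. For any $\epsilon>0$,
\[
\mathbb{P}\big[\,|B_{ij}-\delta_{ij}|>\epsilon\,\big] \;\le\; \frac{\mathrm{Var}(A_{1i}A_{1j})}{m\,\epsilon^2} \;\le\; \frac{2}{m\,\epsilon^2}\;\longrightarrow\;0 \quad\text{as } m\to\infty ,
\]
so each entry converges in probability to the corresponding entry of $I_{d\times d}$. To lift this to the matrix, take a union bound over the $d^2$ entries:
\[
\mathbb{P}\big[\,\|B-I_{d\times d}\|_{\max}>\epsilon\,\big] \;\le\; \sum_{i,j\in[d]}\mathbb{P}\big[\,|B_{ij}-\delta_{ij}|>\epsilon\,\big] \;\le\; \frac{2d^2}{m\,\epsilon^2}\;\longrightarrow\;0 ,
\]
where $\|\cdot\|_{\max}$ is the entrywise maximum norm. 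Since $\mathbb{R}^{d\times d}$ is finite-dimensional, all norms on it are equivalent, so the same convergence in probability holds in the spectral (operator) norm; this is exactly the claim $\frac{1}{m}A^TA\to I_{d\times d}$ in probability.

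As for difficulty: there is essentially no obstacle here. The only points requiring a moment's care are that the two coordinates $A_{ki},A_{kj}$ within a single row are independent (which is what forces the off-diagonal limiting entries to vanish) and that a standard Gaussian has finite fourth moment (so Chebyshev applies to the diagonal terms); both are immediate. The fixedness of $d$ is what makes the union bound harmless and lets entrywise convergence upgrade to matrix convergence without any loss.
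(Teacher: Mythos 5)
Your proof is correct and follows essentially the same route as the paper's: compute the entrywise mean ($\delta_{ij}$) and variance ($2$ on the diagonal, $1$ off it), then apply Chebyshev's inequality to conclude convergence in probability. If anything, your version is slightly more careful than the paper's, since you make the union bound over the $d^2$ entries and the passage from entrywise to matrix convergence explicit.
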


    We first consider the embedding vectors $\bar{\alpha}^{(l)}$ and the embedding angles $\bar{\theta}^{(l)}$. By definition of linear neural network, we have, for all $l\in [L]$ and input $\rvx\in\mathbb{R}^d$,
    \begin{equation}
        \bar{\alpha}^{(l)}(\rvx) = \frac{1}{m^{l/2}}W^{(l)}W^{(l-1)}\cdots W^{(1)}\rvx.
    \end{equation}
    Note that at the network initialization entries of $W^{(l)}$ are i.i.d. and follows $\mathcal{N}(0,1)$. 
    Hence, the inner product 
    \begin{align*}
        \langle \bar{\alpha}^{(l)}(\rvx), \bar{\alpha}^{(l)}(\rvz) \rangle &= \frac{1}{m^l}\rvx^TW^{(1)T}\cdots W^{(l-1)T}W^{(l)T} W^{(l)}W^{(l-1)}\cdots W^{(1)}\rvz
        \overset{(a)}{=} \rvx^T\rvz,
    \end{align*}
    where in step (a) we recursively applied Lemma \ref{lemma:rand_m} $l$ times. 
    Putting $\rvz = \rvx$, we get $\|\bar{\alpha}^{(l)}(\rvx)\| = \|\rvx\|$,  for all $l\in [L]$. By the definition of embedding angles, it is easy to check that $\bar{\theta}^{(l)}(\rvx,\rvz) = \theta_{in}(\rvx,\rvz)$, for all $ l\in [L]$.

    Now, we consider the model gradient $\nabla \bar{f}$ and the model gradient angle $\bar{\phi}$. As we consider the model gradient only at network initialization, we don't explicitly write out the dependence on $\rvw_0$, and we write $\nabla \bar{f}(\rvw_0,\rvx)$ simply as $\nabla \bar{f}(\rvx)$.  The model gradient $\nabla \bar{f}$ can be decomposed as
    \begin{equation}
        \nabla \bar{f}(\rvx) = (\nabla_1 \bar{f}(\rvx), \nabla_2 \bar{f}(\rvx), \cdots , \nabla_{L+1} \bar{f}(\rvx)), ~~ with ~ \nabla_l \bar{f}(\rvx) = \frac{\partial\bar{f}(\rvx)}{\partial W^{(l)}}, \forall l\in[L+1].
    \end{equation}
    Hence, the inner product
    \begin{align*}
         \langle \nabla \bar{f}(\rvx), \nabla \bar{f}(\rvz)\rangle &= \sum_{l=1}^{L+1} \langle  \nabla_l \bar{f}(\rvx), \nabla_l \bar{f}(\rvz)\rangle,
    \end{align*}
    and for all $l\in [l+1]$, 
    \begin{align*}
        \langle  \nabla_l \bar{f}(\rvx), \nabla_l \bar{f}(\rvz)\rangle &= \langle \bar{\alpha}^{(l-1)}(\rvx), \bar{\alpha}^{(l-1)}(\rvz) \rangle \cdot \langle \prod_{l' = l+1}^{L+1}\frac{1}{\sqrt{m}}W^{(l')T}, \prod_{l' = l+1}^{L+1}\frac{1}{\sqrt{m}}W^{(l')T}\rangle \overset{(b)}{=} \rvx^T\rvz.
    \end{align*}
    Here in step (b), we again applied Lemma \ref{lemma:rand_m}. Therefore, 
    \begin{equation}
        \langle  \nabla \bar{f}(\rvx), \nabla \bar{f}(\rvz)\rangle = (L+1)\rvx^T\rvz.
    \end{equation}
    Putting $\rvz=\rvx$, we get $\|\nabla f(\rvx)\| = (L+1) \|\rvx\|$.
    By the definition of model gradient angle, it is easy to check that $\bar{\phi}(\rvx,\rvz) = \theta_{in}(\rvx,\rvz)$.
\end{proof}

\section{Proofs of Theorems for ReLU network}
\subsection{Preliminary results}
Before the proofs, we introduce some useful notations and lemmas. The proofs of these lemmas are deferred to Appendix \ref{sec:tech_pf}.

Given a vector $\rvv\in \mathbb{R}^p$, we define the following diagonal indicator matrix: 
\begin{equation}
    \mathbb{I}_{\{\rvv \ge 0\}} = \mathsf{diag}\left(\mathbb{I}_{\{v_1 \ge 0\}}, \mathbb{I}_{\{v_2 \ge 0\}}, \cdots, \mathbb{I}_{\{v_p \ge 0\}}\right),
\end{equation}
with 
\begin{align*}
    \mathbb{I}_{\{v_i \ge 0\}} = \left\{\begin{array}{ll}
       1  &  v_i \ge 0, \\
       0  & v_i < 0.
    \end{array} \right.
\end{align*}
\begin{lemma}\label{lemma:probability}
    Consider two vectors $\rvv_1,\rvv_2\in\mathbb{R}^p$ and a $p$-dimensional random vector $\rvw\sim \mathcal{N}(0,I_{p\times p})$. Denote $\theta$ as the angle between $\rvv_1$ and $\rvv_2$, i.e., $\cos \theta = \frac{\langle \rvv_1, \rvv_2\rangle }{\|\rvv_1\|\|\rvv_2\|}$. Then, the probability 
    \begin{equation}
        \mathbb{P}[(\rvw^T \rvv_1 \ge 0) \wedge (\rvw^T \rvv_2 \ge 0)] = \frac{1}{2} -  \frac{\theta}{2\pi}.
    \end{equation}
\end{lemma}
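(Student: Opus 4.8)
The plan is to reduce the computation to a two-dimensional problem and then exploit the rotational invariance of the isotropic Gaussian. First I would observe that $\rvw^T\rvv_1$ and $\rvw^T\rvv_2$ depend on $\rvw$ only through its orthogonal projection $\rvw_\parallel$ onto the subspace $V = \mathrm{span}(\rvv_1,\rvv_2)$, since the part of $\rvw$ orthogonal to $V$ contributes nothing to either inner product. As $\rvw\sim\mathcal{N}(0,I_{p\times p})$, the projection $\rvw_\parallel$, written in any orthonormal basis of $V$, is itself a standard Gaussian in $\dim V\le 2$ dimensions. If $\rvv_1$ and $\rvv_2$ are linearly dependent the statement is immediate ($\theta\in\{0,\pi\}$, and the two half-space events either coincide or are complementary up to a null set), so I would assume $\dim V = 2$ and work with $\rvu\sim\mathcal{N}(0,I_{2\times 2})$ on $V$.

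Next I would pass to polar coordinates, $\rvu = R(\cos\Psi,\sin\Psi)$; by orthogonal invariance of $\mathcal{N}(0,I_{2\times 2})$ the angle $\Psi$ is uniform on $[0,2\pi)$ and $R>0$ almost surely, so the event of interest depends only on $\Psi$. Choosing the orthonormal basis of $V$ so that $\rvv_1$ lies along angle $0$ and $\rvv_2$ along angle $\theta$ (recall $0\le\theta\le\pi$ since $\theta=\arccos(\cdot)$), the event $\{\rvw^T\rvv_1\ge 0\}$ becomes $\{\cos\Psi\ge 0\}$, i.e. $\Psi$ in the arc $[-\pi/2,\pi/2]$ of angular length $\pi$ centered at $0$, and $\{\rvw^T\rvv_2\ge 0\}$ becomes $\{\cos(\Psi-\theta)\ge 0\}$, i.e. $\Psi$ in the arc $[\theta-\pi/2,\theta+\pi/2]$ of length $\pi$ centered at $\theta$.

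Then I would intersect the two arcs: for $0\le\theta\le\pi$ the intersection of $[-\pi/2,\pi/2]$ and $[\theta-\pi/2,\theta+\pi/2]$ is $[\theta-\pi/2,\pi/2]$, an arc of length $\pi-\theta$. Since $\Psi$ is uniform on the circle of total length $2\pi$, the desired probability equals $(\pi-\theta)/(2\pi) = \tfrac12 - \tfrac{\theta}{2\pi}$, which is exactly the claim.

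I do not anticipate a real obstacle: the only delicate points are treating the degenerate linearly-dependent case separately (as above) and justifying the orthonormal change of basis, which is legitimate precisely because the standard Gaussian is orthogonally invariant. As an aside, the same identity follows from the classical orthant probability $\mathbb{P}[Z_1\ge 0, Z_2\ge 0] = \tfrac14 + \tfrac{1}{2\pi}\arcsin\rho$ for a standard bivariate Gaussian $(Z_1,Z_2)$ with correlation $\rho$, applied to $Z_i = \rvw^T\rvv_i/\|\rvv_i\|$ (so $\rho=\cos\theta$ and $\arcsin(\cos\theta)=\pi/2-\theta$); but the geometric argument above is self-contained and is the route I would take.
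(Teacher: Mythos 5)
Your proof is correct and follows essentially the same route as the paper's: reduce to the two-dimensional span of $\rvv_1,\rvv_2$ by rotational invariance, use the uniformity of the Gaussian's angular component, and compute the length $\pi-\theta$ of the intersection of the two half-plane arcs (the paper writes this as the integral $\frac{1}{2\pi}\int_{\theta-\pi/2}^{\pi/2}d\omega$). Your explicit handling of the degenerate collinear case is a small extra bit of care the paper omits.
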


\begin{lemma}\label{lemma:b_I_b}
    Consider two arbitrary vectors $\rvv_1,\rvv_2\in\mathbb{R}^p$ and a random matrix $W\in\mathbb{R}^{q\times p}$ with entries $W_{ij}$ i.i.d. drawn from $\mathcal{N}(0,1)$. Denote $\theta$ as the angle between $\rvv_1$ and $\rvv_2$, and define $\rvu_1 = \frac{\sqrt{2}}{\sqrt{q}}\sigma(W\rvv_1)$ and $\rvu_2 = \frac{\sqrt{2}}{\sqrt{q}}\sigma(W\rvv_2)$. Then, in the limit of $q\to\infty$,
    \begin{equation}
        \langle \rvu_1,\rvu_2 \rangle=  \frac{1}{\pi}\left((\pi-\theta)\cos \theta + \sin \theta\right)\|\rvv_1\| \|\rvv_2\|.
    \end{equation}
\end{lemma}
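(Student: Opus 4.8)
The plan is to recognize $\langle \rvu_1, \rvu_2\rangle$ as an empirical average over the i.i.d.\ rows of $W$ and then evaluate the resulting Gaussian expectation, which is the classical arc-cosine kernel integral. Writing $\rvw_i^T$ for the $i$-th row of $W$, each $\rvw_i \sim \mathcal{N}(0, I_{p\times p})$ independently, and
\begin{equation*}
\langle \rvu_1, \rvu_2\rangle = \frac{2}{q}\sum_{i=1}^q \sigma(\rvw_i^T\rvv_1)\,\sigma(\rvw_i^T\rvv_2).
\end{equation*}
The summands are i.i.d.\ with finite mean (they are dominated by $|\rvw_i^T\rvv_1|\,|\rvw_i^T\rvv_2|$, a product of jointly Gaussian quantities with finite moments), so by the law of large numbers the average converges in probability, as $q\to\infty$, to $2\,\mathbb{E}_{\rvw\sim\mathcal{N}(0,I)}\big[\sigma(\rvw^T\rvv_1)\sigma(\rvw^T\rvv_2)\big]$. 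It therefore suffices to compute this single expectation.

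Next I would reduce the expectation to a two-dimensional integral. Since $\sigma(z)=z\,\mathbb{I}_{\{z\ge 0\}}$, the target quantity is $\mathbb{E}\big[(\rvw^T\rvv_1)(\rvw^T\rvv_2)\,\mathbb{I}_{\{\rvw^T\rvv_1\ge 0\}}\mathbb{I}_{\{\rvw^T\rvv_2\ge 0\}}\big]$. The pair $(\rvw^T\rvv_1,\rvw^T\rvv_2)$ depends on $\rvw$ only through its projection onto $\mathrm{span}(\rvv_1,\rvv_2)$. Choosing an orthonormal basis $\rve_1 = \rvv_1/\|\rvv_1\|$, $\rve_2$ of this span so that $\rvv_2/\|\rvv_2\| = \cos\theta\,\rve_1 + \sin\theta\,\rve_2$, and setting $g_1 = \rvw^T\rve_1$, $g_2 = \rvw^T\rve_2$ (i.i.d.\ standard normals), gives $\rvw^T\rvv_1 = \|\rvv_1\|\,g_1$ and $\rvw^T\rvv_2 = \|\rvv_2\|(\cos\theta\,g_1 + \sin\theta\,g_2)$. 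The expectation then factors into $\|\rvv_1\|\,\|\rvv_2\|$ times a two-variable Gaussian integral in $(g_1,g_2)$.

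The main computational step is to evaluate that integral in polar coordinates $g_1 = r\cos\psi$, $g_2 = r\sin\psi$. The two positivity constraints become $\cos\psi \ge 0$ and $\cos(\psi - \theta)\ge 0$, whose intersection (for $\theta\in[0,\pi]$) is the angular wedge $\psi \in [\theta - \pi/2,\, \pi/2]$ of width $\pi - \theta$; this is exactly the geometry underlying Lemma \ref{lemma:probability}, since that wedge has Gaussian measure $\frac{\pi-\theta}{2\pi} = \frac12 - \frac{\theta}{2\pi}$. On this wedge the integrand is $r^2\cos\psi\cos(\psi-\theta)$, so the integral separates into a radial factor $\frac{1}{2\pi}\int_0^\infty r^3 e^{-r^2/2}\,dr = \frac{1}{\pi}$ and an angular factor $\int_{\theta-\pi/2}^{\pi/2}\cos\psi\cos(\psi-\theta)\,d\psi$. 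Expanding $\cos\psi\cos(\psi-\theta) = \tfrac12[\cos\theta + \cos(2\psi-\theta)]$ and integrating over the wedge gives $\tfrac12[(\pi-\theta)\cos\theta + \sin\theta]$. Multiplying these pieces yields $\mathbb{E}[\sigma(\rvw^T\rvv_1)\sigma(\rvw^T\rvv_2)] = \frac{1}{2\pi}\big[(\pi-\theta)\cos\theta + \sin\theta\big]\|\rvv_1\|\,\|\rvv_2\|$, and the factor of $2$ from the averaging produces the claimed formula.

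The only genuine obstacle is the bookkeeping in the angular integral: fixing the wedge endpoints correctly for every $\theta\in[0,\pi]$, and handling the boundary terms of $\cos(2\psi-\theta)$, where the two endpoints each contribute $\pm\sin\theta$ and combine to $2\sin\theta$. Everything else is a routine application of the law of large numbers followed by a change of variables, so I expect no further difficulty.
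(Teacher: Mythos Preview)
Your proposal is correct and follows essentially the same route as the paper: both reduce the inner product to an expectation via the law of large numbers, project onto the two-dimensional span of $\rvv_1,\rvv_2$ using rotational invariance, pass to polar coordinates, and evaluate the same angular integral $\int_{\theta-\pi/2}^{\pi/2}\cos\psi\cos(\psi-\theta)\,d\psi$ over the wedge identified in Lemma~\ref{lemma:probability}. Your treatment is slightly more explicit about the radial integral and the LLN hypothesis, whereas the paper factors out $\mathbb{E}[\|\tilde{\rvw}\|^2]=2$ directly, but the argument is otherwise identical.
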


\begin{lemma}\label{lemma:W_I_W}
    Consider two arbitrary vectors $\rvv_1,\rvv_2\in\mathbb{R}^p$ and two random matrices $U\in \mathbb{R}^{s\times q}$ and $W\in\mathbb{R}^{q\times p}$, where all entries $U_{ij}$, $i\in [s]$ and $j\in [q]$,  and $W_{kl}$, $k\in [q]$ and $l\in [p]$, are i.i.d. drawn from $\mathcal{N}(0,1)$. Denote $\theta$ as the angle between $\rvv_1$ and $\rvv_2$, and define matrices $A_1 =  \frac{\sqrt{2}}{\sqrt{q}} U \mathbb{I}_{\{W\rvv_1\ge 0\}}$ and $A_2 =  \frac{\sqrt{2}}{\sqrt{q}} U \mathbb{I}_{\{W\rvv_2\ge 0\}}$. Then, in the limit of $q\to\infty$, the  matrix
    \begin{equation}
    A_1 A_2^T  = \frac{\pi-\theta}{\pi}I_{s\times s}.
    \end{equation}
\end{lemma}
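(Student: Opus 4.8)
The plan is to write $A_1A_2^T$ explicitly, recognize it as an empirical average over the $q$ rows of $W$ (equivalently, columns of $U$), and then combine a law of large numbers with Lemma \ref{lemma:probability}.

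First I would observe that the product of the two diagonal indicator matrices is again diagonal: $\mathbb{I}_{\{W\rvv_1\ge 0\}}\mathbb{I}_{\{W\rvv_2\ge 0\}} = D$, where $D = \mathsf{diag}(D_{11},\dots,D_{qq})$ with $D_{kk} = \mathbb{I}_{\{(W\rvv_1)_k\ge 0\}}\,\mathbb{I}_{\{(W\rvv_2)_k\ge 0\}}$. Writing $\rvw_k\in\mathbb{R}^p$ for the $k$-th row of $W$ and $\rvu_k\in\mathbb{R}^s$ for the $k$-th column of $U$, this yields
\begin{equation*}
A_1A_2^T \;=\; \frac{2}{q}\,U D U^T \;=\; \frac{2}{q}\sum_{k=1}^{q} D_{kk}\,\rvu_k\rvu_k^T,
\end{equation*}
where $D_{kk} = \mathbb{I}_{\{\rvw_k^T\rvv_1\ge 0\}}\,\mathbb{I}_{\{\rvw_k^T\rvv_2\ge 0\}}$ is a function of $\rvw_k$ only.

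Next I would compute the expectation of a single summand. Because the rows of $W$ and the columns of $U$ are mutually independent, $D_{kk}$ is independent of $\rvu_k$; moreover $\rvw_k\sim\mathcal{N}(0,I_p)$ and $\rvu_k\sim\mathcal{N}(0,I_s)$. By Lemma \ref{lemma:probability}, $\mathbb{E}[D_{kk}] = \tfrac12 - \tfrac{\theta}{2\pi} = \tfrac{\pi-\theta}{2\pi}$, while $\mathbb{E}[\rvu_k\rvu_k^T] = I_{s\times s}$ (off-diagonal entries vanish by independence of the coordinates of $\rvu_k$). Hence $\mathbb{E}[D_{kk}\,\rvu_k\rvu_k^T] = \tfrac{\pi-\theta}{2\pi}\,I_{s\times s}$, and consequently $\mathbb{E}[A_1A_2^T] = \tfrac{\pi-\theta}{\pi}\,I_{s\times s}$.

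Finally, the pairs $(\rvw_k,\rvu_k)$ are i.i.d.\ over $k\in[q]$, so the matrices $D_{kk}\rvu_k\rvu_k^T$ are i.i.d.\ with finite second moments (each entry is dominated by $\|\rvu_k\|^2$, whose moments are finite since $s$ is held fixed). Applying the (entrywise) law of large numbers gives $\tfrac{2}{q}\sum_{k=1}^q D_{kk}\rvu_k\rvu_k^T \to \tfrac{\pi-\theta}{\pi} I_{s\times s}$ in probability as $q\to\infty$, which is exactly the claimed identity. The only substantive input is Lemma \ref{lemma:probability} (which is already available); the remaining obstacle is merely the bookkeeping of independence and making sure the convergence is taken with $s$ fixed so that the second moments controlling the LLN remain finite and the convergence holds uniformly over the finitely many entries.
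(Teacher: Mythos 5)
Your proposal is correct and follows essentially the same route as the paper's proof: decompose $A_1A_2^T$ as an empirical average of $D_{kk}\rvu_k\rvu_k^T$ over the $q$ rows/columns, pass to the expectation via the law of large numbers, factor by independence of $U$ and $W$, and invoke Lemma \ref{lemma:probability} together with $\mathbb{E}[\rvu\rvu^T]=I_{s\times s}$. Your added remarks on finite second moments with $s$ fixed only make the limiting step more explicit than the paper's.
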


\begin{lemma}\label{lemma:AIA_ge_AA}
Consider matrix $B=AA^T$ with $A\in\mathbb{R}^{n\times p}$ and a random matrix  $W\in\mathbb{R}^{q\times p}$ where all entries of $W$ are i.i.d. drawn from $\mathcal{N}(0,1)$. Define the tensor $\mathbf{A}'\in\mathbb{R}^{n\times p\times q}$, such that $\mathbf{A}'_{ikl}:= \sqrt{2}A_{ik} \mathbb{I}_{\{W_{l:}A_{i:} \ge 0\}}$. Let $B'\in\mathbb{R}^{n\times n}$ be the matrix such that each entry $B'_{ij} = \sum_{k,l}\mathbf{A}'_{ikl}\mathbf{A}'_{jkl}$. Then, in the limit of $q\to\infty$, the smallest and largest eigenvalues satisfy: $\lambda_{min}(B') > \lambda_{min}(B)$, and $\lambda_{max}(B') < \lambda_{max}(B)$.
\end{lemma}

\subsection{Proof of Lemma \ref{lemma:gradient_angle_expression}}
\begin{proof}
    The model gradient $\nabla f(\rvx)$ is composed of the components $\nabla_l f(\rvx) \triangleq \frac{\partial f}{\partial W^{l}}$, for $l\in [L+1]$. Each such component has the following expression: for $l\in [L+1]$
    \begin{equation}\label{eq:nabla_l_f}
        \nabla_l f(\rvx) = \alpha^{(l-1)}(\rvx)\delta^{(l)}(\rvx),
    \end{equation}
    where 
    \begin{equation}\label{eq:delta_defi}
        \delta^{(l)}(\rvx) = \left(\frac{2}{m}\right)^{\frac{L-l+1}{2}} W^{(L+1)} \mathbb{I}_{\{\tilde{\alpha}^{(L)}(\rvx) \ge 0\}}W^{(L)}\mathbb{I}_{\{\tilde{\alpha}^{(L-1)}(\rvx) \ge 0\}}\cdots W^{(l+1)}\mathbb{I}_{\{\tilde{\alpha}^{(l)}(\rvx) \ge 0\}}.
    \end{equation}
    Note that in Eq.(\ref{eq:nabla_l_f}), $\nabla_l f(\rvx)$ is  an outer product of a column vector $\alpha^{(l-1)}(\rvx)\in\mathbb{R}^{m_{l-1}\times 1}$ ($m_{l-1}=d$ if $l=1$, and $m_{l-1}=m$ otherwise) and a row vector $\delta^{(l)}(\rvx)\in\mathbb{R}^{1\times m_{l}}$ ($m_{l}=1$ if $l=L+1$, and $m_{l}=m$ otherwise).

    First, we consider an infinitely wide neural network $f^\infty$ of depth $L$. We have the following lemma.

    \begin{lemma}\label{lemma:feat_angle_relation}
Consider a ReLU network $f^\infty$ defined in Eq.(\ref{eq:fully_connected_defi}) with infinite width. For all $l \in [L]$,  the following relations hold:
\begin{itemize}
    \item for any input $\rvx\in\mathbb{R}^d$, $\|\alpha^{(l)}(\rvx)\| = \|\rvx\|$;
    \item for any two inputs $\rvx, \rvz\in\mathbb{R}^d$, $\theta^{(l)}(\rvx,\rvz) = g\left(\theta^{(l-1)}(\rvx,\rvz)\right).$ Let $g^l(\cdot)$ be the $l$-fold  composition of $g(\cdot)$, then
    \begin{equation}
\label{eq:feat_angle_relation_abs}
\theta^{(l)}(\rvx,\rvz) = g^{\circ l}\left(\theta_{in}(\rvx,\rvz)\right).
\end{equation}
\end{itemize}
\end{lemma}

    We consider the inner product $\langle \nabla_l f^\infty(\rvz),  \nabla_l f^\infty(\rvx)\rangle$, for $l\in [L+1]$.\footnote{With a bit of abuse of notation,  we refer to the flattened vectors of $\nabla_l f$ in the inner product.} By  Eq.(\ref{eq:nabla_l_f}), we have
    \begin{equation}\label{eq:pf_gradient_component_inner_prod}
        \langle \nabla_l f^\infty(\rvz),  \nabla_l f^\infty(\rvx)\rangle = \langle \delta^{(l)}(\rvz), \delta^{(l)}(\rvx)\rangle \cdot \langle \alpha^{(l-1)}(\rvz), \alpha^{(l-1)}(\rvx)\rangle.
    \end{equation}

    For $\langle \alpha^{(l-1)}(\rvz), \alpha^{(l-1)}(\rvx)\rangle$, applying Lemma \ref{lemma:feat_angle_relation}, we have 
    \begin{equation}\label{eq:pf_alpha_inner_prod}
        \langle \alpha^{(l-1)}(\rvz), \alpha^{(l-1)}(\rvx)\rangle = \|\rvx\|\|\rvz\| \cos \theta^{(l-1)}(\rvx,\rvz).
    \end{equation}
    For $\langle \delta^{(l)}(\rvz), \delta^{(l)}(\rvx)\rangle$, by definition Eq.(\ref{eq:delta_defi}), we have
    \begin{align*}
       & \langle \delta^{(l)}(\rvz), \delta^{(l)}(\rvx)\rangle 
        = \left(\frac{2}{m}\right)^{L-l+1}
        \\
       &\qquad \times W^{(L+1)} \mathbb{I}_{\{\tilde{\alpha}^{(L)}(\rvx) \ge 0\}}\cdots \underbrace{W^{(l+1)}\mathbb{I}_{\{\tilde{\alpha}^{(l)}(\rvx) \ge 0, \tilde{\alpha}^{(l)}(\rvz)\ge 0\}}W^{(l+1)T}}_{A}\cdots \mathbb{I}_{\{\tilde{\alpha}^{(L)}(\rvz) \ge 0\}}W^{(L+1)T}
    \end{align*}
    Recalling that $\tilde{\alpha}^{(l)}=W^{(l)}\tilde{\alpha}^{(l-1)}$ and applying Lemma \ref{lemma:W_I_W} on the the term $A$ above, we obtain
    \begin{align*}
        \langle \delta^{(l)}(\rvz), \delta^{(l)}(\rvx)\rangle &= \frac{\pi-\theta^{(l-1)}(\rvx,\rvz)}{\pi}\langle \delta^{(l+1)}(\rvz), \delta^{(l+1)}(\rvx)\rangle.
    \end{align*}
Recursively applying the above formula for $l'=l, l+1, \cdots, L$, and noticing that $\delta^{(L+1)} = 1$, we have
\begin{equation}\label{eq:pf_delta_inner_prod}
    \langle \delta^{(l)}(\rvz), \delta^{(l)}(\rvx)\rangle = \prod_{l'=l-1}^{L-1}\left(1-\frac{\theta^{(l')}(\rvx,\rvz)}{\pi}\right).
\end{equation}
Combining Eq.(\ref{eq:pf_gradient_component_inner_prod}), (\ref{eq:pf_alpha_inner_prod}) and (\ref{eq:pf_delta_inner_prod}), we have
\begin{equation}
    \langle \nabla_l f^\infty(\rvz),  \nabla_l f^\infty(\rvx)\rangle = \|\rvx\|\|\rvz\| \cos \theta^{(l-1)}(\rvx,\rvz)\prod_{l'=l-1}^{L-1}\left(1-\frac{\theta^{(l')}(\rvx,\rvz)}{\pi}\right).
\end{equation}
For the inner product between the full model gradients, we have
\begin{equation}
    \langle \nabla f^\infty(\rvz),  \nabla f^\infty(\rvx)\rangle = \sum_{l=1}^{L+1}\langle \nabla_l f^\infty(\rvz),  \nabla_l f^\infty(\rvx)\rangle =\|\rvx\|\|\rvz\|\sum_{l=0}^{L} \left[\cos \theta^{(l)}(\rvx,\rvz)\prod_{l'=l}^{L-1}\left(1-\frac{\theta^{(l')}(\rvx,\rvz)}{\pi}\right)\right].
\end{equation}
Putting $\rvx=\rvz$ in the above equation, we have $\theta^{(l)}(\rvx,\rvz)=0$ for all $l\in[L]$, and obtain
\begin{equation}
    \|\nabla f^\infty(\rvx)\|^2 = \|\rvx\|^2\cdot (L+1).
\end{equation}
Hence, we have, for an infinitely wide neural network,
\begin{equation}
    \cos \phi^\infty(\rvx,\rvz) = \frac{\langle \nabla f^\infty(\rvz),  \nabla f^\infty(\rvx)\rangle}{\|\nabla f^\infty(\rvx)\|\|\nabla f^\infty(\rvz)\|} = \frac{1}{L+1}\sum_{l=0}^{L}\left[\cos \theta^{(l)}(\rvx,\rvz) \prod_{l'=l}^{L-1} (1-\theta^{(l')}(\rvx,\rvz)/\pi)\right].
\end{equation}

Now, we consider the finitely wide neural network $f$. As have been shown by \cite{jacot2018neural,du2018gradient,liu2020linearity}, 
\begin{equation}
    \langle \nabla f(\rvz),\nabla f(\rvx)\rangle - \langle \nabla f^\infty(\rvz),\nabla f^\infty(\rvx)\rangle = O\left(\frac{1}{\sqrt{m}}\right),
\end{equation}
with high probability of random initialization of the network $f$.
Letting $\rvz=\rvx$ above, we also have 
\begin{equation}
    \|\nabla f(\rvx)\|^2 = \|\nabla f^\infty(\rvx)\|^2 + O\left(\frac{1}{\sqrt{m}}\right). 
\end{equation}
Using the above two equations, we have
\begin{equation}
     \cos \phi(\rvx,\rvz) = \frac{\langle \nabla f(\rvz),  \nabla f(\rvx)\rangle}{\|\nabla f(\rvx)\|\|\nabla f(\rvz)\|} = \cos \phi^\infty(\rvx,\rvz) + O\left(\frac{1}{\sqrt{m}}\right),
\end{equation}
with high probability of random initialization of the network $f$.
\end{proof}

\subsection{Proof of Theorem \ref{thm:gradient_angle_approx}}
\begin{proof}
 For simplicity of notation, we don't explicitly write out the dependent on the inputs $\rvx,\rvz$, and write $\theta^{(l)} \triangleq \theta^{(l)}(\rvx,\rvz)$, and $\phi \triangleq \phi(\rvx,\rvz)$. We start the proof with the summation term on the R.H.S. of Eq. \ref{eq:thm_angle_phi} in Lemma \ref{lemma:gradient_angle_expression}.
\begin{align*}
     &\frac{1}{L+1}\sum_{l=0}^{L}\left[\cos \theta^{(l)} \prod_{l'=l}^{L-1} (1-\theta^{(l')}/\pi)\right] \\
    &\overset{(a)}{=}  \frac{1}{L+1}\sum_{l=0}^{L}\left[\cos \theta^{(0)}\prod_{l'=0}^{l-1} \left(1+\frac{1}{\pi}\tan \theta^{(l')} - \frac{1}{\pi}\theta^{(l')}\right) \prod_{l'=l}^{L-1} (1-\theta^{(l')}/\pi)\right]\\
    &\overset{(b)}{=} \frac{1}{L+1}\sum_{l=0}^{L}\left[\cos \theta^{(0)}\prod_{l'=0}^{l-1} \left(1+\frac{1}{3\pi} (\theta^{(l')})^3 + o(\theta^{(l')})^3\right) \prod_{l'=l}^{L-1} (1-\theta^{(l')}/\pi)\right]\\
    &\overset{(c)}{=} \frac{\cos \theta^{(0)}}{L+1}\sum_{l=0}^{L}\left[\prod_{l'=0}^{l-1} \left(1+\frac{1}{3\pi} (\theta^{(0)})^3 + o(\theta^{(0)})^3\right)\right.\\
    & \qquad \qquad\qquad\quad \times \left.\prod_{l'=l}^{L-1} \left(1-\frac{1}{\pi}\theta^{(0)}+ \frac{l'}{3\pi^2}(\theta^{(0)})^2 + o((\theta^{(0)})^2)\right)\right]\\
    &= \frac{\cos \theta^{(0)}}{L+1}\sum_{l=0}^{L} \left(1 - \frac{L-l}{\pi}\theta^{(0)} + \frac{(L-l)(2L-l-2)}{3\pi^2}(\theta^{(0)})^2 + o((\theta^{(0)})^2)\right)\\
    &= \cos\theta^{(0)} \left(1-\frac{L}{2\pi}\theta^{(0)} + o(\theta^{(0)})\right).
\end{align*}

In step (a) above, we use the relation $\theta^{(l)}=g(\theta^{(l-1)})$, i.e., $\cos\theta^{(l)}=1-\cos\theta^{(l-1)}/\pi + \sin\theta^{(l-1)}/\pi$; in step (b), we used the fact that $\theta^{(l)}<\theta_{in}$ (Theorem \ref{thm:app_theta_l}) which stays small and used the Taylor expansion of $tan$. In step (c), we used the following lemma (proof is in Appendix \ref{sec_app:feature_angle_small_o}):
\begin{lemma}\label{thm:feature_angle_small_o}
    Given any inputs $\rvx, \rvz$ such that $\theta_{in}(\rvx,\rvz) \ll 1$, for each $l\in[L]$, the $l$-embedding angle $\theta^{(l)}(\rvx,\rvz)$ can be expressed as $$\theta^{(l)}(\rvx,\rvz) = \theta_{in}(\rvx,\rvz) - \frac{l}{3\pi} (\theta_{in}(\rvx,\rvz))^2 + o\left((\theta_{in}(\rvx,\rvz))^2\right).$$
\end{lemma}
By Lemma \ref{lemma:gradient_angle_expression}, there exists a constant $c$ such that
\begin{equation}\label{eq:model_grad_angle_and_theta_approx}
    \cos \phi(\rvx,\rvz) <  \left(1-\frac{L}{2\pi}\theta_{in} + o(\theta_{in})\right)\cos \theta_{in} + \frac{c}{\sqrt{m}}.
\end{equation}
When $m>\frac{16\pi^2c^2}{L^2\theta_{in}^2\cos^2 \theta_{in}}$, we have
$\cos \phi(\rvx,\rvz) < \cos \theta_{in}(\rvx,\rvz)$, namely, $\phi(\rvx,\rvz) > \theta_{in}(\rvx,\rvz)$.
\end{proof}

\subsection{Proof of Theorem \ref{thm:infinite-depth-angle}}
\begin{proof}
We start the proof with an analysis of the embedding angles $\theta^{(l)}$ in the infinite depth limit. 

First, by Lemma \ref{lemma:feat_angle_relation} and Proposition \ref{prop:g_func}, we easily find that, for all input angle $\theta_{in}=\theta^{(0)}\ne 0$, $\theta^{(l)}$ is monotonically decreasing and converges to zero: $\lim_{l\to\infty}\theta^{(l)}\to 0$. As the following analysis is independent of $\theta^{(0)}$, we will not explicitly write out the arguments $\rvx$ and $\rvz$.

Now, we analyze its convergence rate, utilizing Eq.(\ref{eq:theta_l_recursive}). As we are considering the infinite depth limit and $\theta^{(l)}$ converges to zero, we can drop its $o(\cdot)$ term and rewrite Eq.(\ref{eq:theta_l_recursive}) as:
\begin{equation}
    \frac{d\theta^{(l)}}{dl} = -\frac{1}{3\pi}(\theta^{(l)})^2.
\end{equation}
Solving this differential equation, we get $\theta^{(l)}=\theta^{(0)}/(1+(3\pi)^{-1}\theta^{(0)}l)$, and
\begin{equation}\label{eq:l_theta_l_limit}
    \lim_{l\to\infty}\theta^{(l)}\cdot l = 3\pi.
\end{equation}

By Eq.(\ref{eq:thm_angle_phi}), we get the following relation between $\phi_L$ and $\phi_{L+1}$:
\begin{equation}
    (L+2)\cos\phi_{L+1} = (1-\theta^{(L)}/\pi)\cdot (L+1)\cos\phi_L + \cos \theta^{(L+1)}.
\end{equation}
Rearranging terms, we get
% \begin{equation}
%     (L+2)\left(\cos\phi_{L+1}-\cos\phi_{L})\right) = -\left(1+\frac{(L+1)\theta^{(L)}}{\pi}\right)\cos\phi_{L} + \cos\theta^{(L+1)}.
% \end{equation}
\begin{equation}\label{eq: convergence_phi}
    \left(\cos\phi_{L+1}-\cos\phi_{L}\right) = -\left(1+\frac{(L+1)\theta^{(L)}}{\pi}\right)\frac{\cos\phi_{L}}{L+2} + \frac{1}{L+2}\cos\theta^{(L+1)}.
\end{equation}
The right side of Eq.(\ref{eq: convergence_phi}) converges to zero as $L\rightarrow\infty$. Using Eq.(\ref{eq:l_theta_l_limit}) and $\lim_{L\to\infty}\cos\theta^{(L+1)}=1$,  we have
\begin{equation}
    \lim_{L\to\infty}\cos\phi_L=\frac{1}{4}.
\end{equation}
Hence, $\phi_{L}$ converges to $\arccos\frac{1}{4}\approx 75.5^\circ$, in the limit $L\to\infty$.
\end{proof}

\subsection{Proof of Theorem \ref{thm:shallow_condition_number}}\label{secapp:pf_thm_shallow_kappa}
\begin{proof}
First of all, we note that in scenario (b), i.e., the network with all ReLU activation removed, the network simply becomes a linear neural network (while with the same trainable parameters $W^{(1)}$ as the ReLU network in scenario (a)). By the analysis in Section \ref{sec:lnn}, we can easily see that the NTK matrix in scenario (b) is equivalent to the Gram matrix $G$, and $\kappa_b=\kappa_0$. Hence, whenever comparing the two scenarios, it suffices to compare the NTK $K$ (and its condition number $\kappa$) of ReLU network with the Gram matrix $G$ (and its condition number $\kappa_0$).

We prove the theorem by induction. 

\paragraph{Base case: ReLU neural network of depth $L=1$.}
First, consider the shallow ReLU neural network 
\begin{equation}\label{eq:defi_shallow_nn}
    f(W; \rvx) = \frac{\sqrt{2}}{\sqrt{m}}\rvv^T\sigma (W\rvx),
\end{equation}
where $W$ are the trainable parameters.

The model gradient, for an arbitrary input $\rvx$, can be written as
\begin{equation}
    \nabla f(\rvx) = \rvx \delta(\rvx) \in \mathbb{R}^{d\times m},
\end{equation}
where $\delta(\rvx)\in \mathbb{R}^{1\times m}$ has the following expression
\begin{equation*}
    \delta(\rvx) = \sqrt{\frac{2}{m}} \rvv^T \mathbb{I}_{\{W\rvx \ge 0\}}.
\end{equation*}
At initialization, $W$ is a random matrix. 
Recall that the NTK $K=FF^T$, where the gradient feature matrix $F$ consist of the gradient feature vectors $\nabla f(\rvx)$ for all $\rvx$ for the dataset. Applying Lemma \ref{lemma:rand_m} in the limit of $m\to \infty$, we have that each entry $K_{ij}$ is equivalent to $\sum_{k,l}\mathbf{A}'_{ikl}\mathbf{A}'_{jkl}$, with $\mathbf{A}'_{ikl}:=\sqrt{2}X_{ik}\mathbb{I}_{\{W_{l:}X_{i:}\ge0\}}$, where $X\in\mathbb{R}^{n\times d}$ is the matrix of input data. Then apply Lemma \ref{lemma:AIA_ge_AA}, we immediately have that
$$ \lambda_{min}(K) > \lambda_{min}(G), \quad \lambda_{max}(K) < \lambda_{max}(G).$$ 
Hence, we have that $\kappa_a < \kappa_b$.

In addition, note that this network has one hidden layer, and that the ``zero-hidden layer'' network is just simply the linear model. For linear model, the NTK is simply the Gram matrix. Hence, for the base case, we have $\kappa_{f_1}<\kappa_{f_2}=\kappa_0$, with network $f_1$ of depth $1$ and network $f_2$ of depth $0$. 
\paragraph{Induction hypothesis.}
Suppose that, for a ReLU network $f_{L-1}$ of depth $L-1$, its NTK condition number $\kappa_{L-1}$ is strictly smaller than $\kappa_0$.

\paragraph{Induction step.} Now, let's consider the two ReLU networks $f_{L}$ of depth $L$ and $f_{L-1}$. It is suffices to prove that $\kappa_L < \kappa_{L-1}$.
The model gradients, for any given input $\rvx$, can be written as:
\begin{align*}
    \nabla f_L(\rvx) = \rvx \delta_L(\rvx)\in\mathbb{R}^{d\times m}, ~~~\nabla f_{L-1}(\rvx) = \rvx \delta_{L-1}(\rvx)\in\mathbb{R}^{d\times m},
\end{align*}
where
\begin{align*}
   & \delta_L(\rvx) = \sqrt{\frac{2}{m}}W^{(L+1)}\mathbb{I}_{\{W^{(L)}\alpha^{(L-1)}\ge 0\}} \sqrt{\frac{2}{m}}W^{(L)}\mathbb{I}_{\{W^{(L-1)}\alpha^{(L-2)}\ge 0\}}\cdots \sqrt{\frac{2}{m}}W^{(2)}\mathbb{I}_{\{W^{(1)}\alpha^{(0)}\ge 0\}}\\
   & \delta_{L-1}(\rvx) =  \sqrt{\frac{2}{m}}W^{(L)}\mathbb{I}_{\{W^{(L-1)}\alpha^{(L-2)}\ge 0\}}\cdots \sqrt{\frac{2}{m}}W^{(2)}\mathbb{I}_{\{W^{(1)}\alpha^{(0)}\ge 0\}}
\end{align*}
Note that the matrix $W^{(L)}$ has different dimensions for $f_{L}$  and $f_{L-1}$. 

Using the same argument as in the base case, as well as applying Lemma \ref{lemma:rand_m} when contracting the $\delta(\rvx)$'s, we directly obtain $\kappa_L < \kappa_{L-1}$.
\end{proof}

\subsection{Proof of Theorem \ref{thm:infi-depth-ntk}}
\begin{proof}
First, consider the normalized NTK matrix $\frac{1}{L+1}K$. By Lemma \ref{lemma:gradient_angle_expression}, we have for it diagonal elements:
\begin{equation}
    \frac{1}{L+1}K(\rvx_i,\rvx_i)=\frac{1}{L+1}\|\nabla f(\rvx)\|^2 = \|\rvx_i\|^2 = 1.
\end{equation}

By Theorem \ref{thm:infinite-depth-angle}, we have that, in the infinite depth limit, each of off-diagonal elements of the normalized NTK matrix converges to $\frac{1}{4}$. Namely, 
\begin{equation}
\frac{1}{L+1}K \to \begin{bmatrix}
1 & \frac{1}{4} & \frac{1}{4} & \cdots & \frac{1}{4} \\
\frac{1}{4} & 1 & \frac{1}{4} & \cdots & \frac{1}{4} \\
\frac{1}{4} & \frac{1}{4} & 1 & \cdots & \frac{1}{4} \\
\vdots & \vdots & \vdots & \ddots & \vdots \\
\frac{1}{4} & \frac{1}{4} & \frac{1}{4} & \cdots & 1
\end{bmatrix} = \frac{3}{4}I_n + \frac{1}{4}J_n,
\end{equation}
where matrix $J_n$ has its all elements being ones. 
Therefore, $\lim_{L\to\infty}\frac{1}{L+1}K$ has one eigenvalue $\lambda_1=1+\frac{n}{4}$, and all remaining eigenvalues $\lambda_2=\lambda_3=\cdots=\lambda_n=\frac{3}{4}$. Then its condition number is $\kappa=\frac{\lambda_1}{\lambda_n}=\frac{4+n}{3}$.
\end{proof}

\section{Relaxing the constraint on top layers}\label{sec:relax_top_layer}
\begin{theorem}\label{thm:deep_relu_condition_num}
    Consider a $L$-layer ReLU neural network $f$ as defined in Eq.(\ref{eq:fully_connected_defi}) in the infinite width limit $m\to\infty$ and at initialization. We compare the NTK condition numbers $\kappa_a$ and $\kappa_b$ of the two scenarios: (a) the network with the ReLU activation, and (b) the network with all the ReLU activation removed. Consider the dataset $\mathcal{D} = \{(\rvx_1,y_1), (\rvx_2,y_2)\}$ with the input angle $\theta_{in}$ between $\rvx_1$ and $\rvx_2$ small, $\theta_{in} \ll 1$. Then, the NTK condition number $\kappa_a < \kappa_b$. Moreover, for two ReLU neural networks $f_1$ of depth $L_1$ and $f_2$ of  depth $L_2$ 
 with $L_1 > L_2$, we have $\kappa_{f_1} < \kappa_{f_2}$.
\end{theorem}
\begin{proof}
First, let's consider the scenario (a), i.e. the ReLU network. According to the definition of NTK and Lemma \ref{lemma:gradient_angle_expression}, the NTK matrix $K$ for this dataset $\mathcal{D} = \{(\rvx_1,y_1), (\rvx_2,y_2)\}$ is (NTK is normalized by the factor $1/(L+1)^2$):
\begin{equation*}
    K = \left(\begin{array}{cc}
        \|\nabla f(\rvx_1)\|^2 & \langle \nabla f(\rvx_1), \nabla f(\rvx_2)\rangle \\
        \langle \nabla f(\rvx_2), \nabla f(\rvx_1)\rangle & \|\nabla f(\rvx_2)\|^2
    \end{array}\right) = \left(\begin{array}{cc}
        \|\rvx_1\|^2 & \|\rvx_1\|\|\rvx_2\|\cos \phi \\
        \|\rvx_1\|\|\rvx_2\|\cos \phi & \|\rvx_2\|^2
    \end{array}\right).
\end{equation*}
The eigenvalues of the NTK matrix $K$ are given by
\begin{subequations}\label{eq:pf_eigen_ntk_deep}
\begin{align}
\lambda_1(K) &= \frac{1}{2}\left(\|\rvx_1\|^2 + \|\rvx_2\|^2 + \sqrt{\|\rvx_1\|^4 + \|\rvx_2\|^4 + \|\rvx_1\|^2\|\rvx_2\|^2\cos 2\phi}\right),\\
\lambda_2(K) &= \frac{1}{2}\left(\|\rvx_1\|^2 + \|\rvx_2\|^2 - \sqrt{\|\rvx_1\|^4 + \|\rvx_2\|^4 + \|\rvx_1\|^2\|\rvx_2\|^2\cos 2\phi}\right).
\end{align}
\end{subequations}

In the scenario (b), the ReLU activation is removed in the network, resulting in a linear neural network. In this case, the NTK is equivalent to the Gram matrix $G$, as given by Corollary \ref{coro:linear_nn}.  We have
\begin{equation*}
    G = \left(\begin{array}{cc}
        \|\rvx_1\|^2 & \rvx_1^T\rvx_2 \\
        \rvx_1^T\rvx_2 & \|\rvx_2\|^2
    \end{array}\right) = \left(\begin{array}{cc}
        \|\rvx_1\|^2 & \|\rvx_1\|\|\rvx_2\|\cos \theta_{in} \\
        \|\rvx_1\|\|\rvx_2\|\cos \theta_{in} & \|\rvx_2\|^2
    \end{array}\right),
\end{equation*}
and its eigenvalues as
\begin{align*}
\lambda_1(G) &= \frac{1}{2}\left(\|\rvx_1\|^2 + \|\rvx_2\|^2 + \sqrt{\|\rvx_1\|^4 + \|\rvx_2\|^4 + \|\rvx_1\|^2\|\rvx_2\|^2\cos 2\theta_{in}}\right),\\
\lambda_2(G) &= \frac{1}{2}\left(\|\rvx_1\|^2 + \|\rvx_2\|^2 - \sqrt{\|\rvx_1\|^4 + \|\rvx_2\|^4 + \|\rvx_1\|^2\|\rvx_2\|^2\cos 2\theta_{in}}\right).
\end{align*}
By Theorem \ref{thm:gradient_angle_approx}, we have $\cos \phi < \cos \theta_{in}$, when $\theta_{in} \ll 1$ and $\theta_{in}\ne 0$. Hence, we have the following relations
\begin{equation*}
\lambda_1(G) > \lambda_1(K) > \lambda_2(K) > \lambda_2(G),
\end{equation*}
which immediately implies $\kappa_a < \kappa_b$.

When comparing ReLU networks with different depths, i.e., network $f_1$ with depth $L_1$ and network $f_2$ with depth $L_2$ with $L_1 > L_2$,  notice that in Eq.(\ref{eq:pf_eigen_ntk_deep}) the top eigenvalue $\lambda_1$ monotonically decreases in $\phi$, and the bottom (smaller) eigenvalue $\lambda_2$ monotonically increases in $\phi$. By the proof of Theorem \ref{thm:gradient_angle_approx}, we know that the deeper ReLU network $f_1$ has a better separation than the shallower one $f_2$, i.e., $\phi_{f_1} > \phi_{f_2}$. Hence, we get 
\begin{equation}
    \lambda_1(K_{f_2}) > \lambda_1(K_{f_1}) > \lambda_2(K_{f_1}) > \lambda_2(K_{f_2}).
\end{equation}
Therefore, we obtain $\kappa_{f_1} < \kappa_{f_2}$. Namely the deeper ReLU network has a smaller NTK condition number.
\end{proof}

\section{Technical proofs}\label{sec:tech_pf}
\subsection{Proof of Lemma \ref{lemma:rand_m}}
\label{sec:app_pf_lemma}
\begin{proof} 
We denote $A_{ij}$ as the $(i,j)$-th entry of the matrix $A$. Therefore,  $(A^TA)_{ij}=\sum_{k=1}^m A_{ki}A_{kj}$. 
First we find the mean of each $(A^TA)_{ij}$. Since $A_{ij}$ are i.i.d. and has zero mean, we can easily see that for any index $k$,
\begin{align*}
    \mathbb{E}[A_{ki}A_{kj}] =\begin{cases}
                                1,& \text{if } i=j\\
                                0,              & \text{otherwise}
                                \end{cases}.
\end{align*}
Consequently,
\begin{align*}
    \mathbb{E}[(\frac{1}{m}A^TA)_{ij}] =\begin{cases}
                                1,& \text{if } i=j\\
                                0,              & \text{otherwise}
                                \end{cases}.
\end{align*}
That is $\mathbb{E}[\frac{1}{m}A^TA]=I_d$. 

Now we consider the variance of each $(A^TA)_{ij}$. If $i\neq j$ we can explicitly write,
\begin{align*}
Var\left[\frac{1}{m}(A^TA)_{ij}\right] &= \frac{1}{m^2}\cdot \mathbb{E}\left[\sum_{k_1=1}^m\sum_{k_2=1}^m A_{k_1i}A_{k_1j}A_{k_2i}A_{k_2j}\right]\\
&=\frac{1}{m^2}\cdot \sum_{k_1=1}^m\sum_{k_2=1}^m \mathbb{E}\left[A_{k_1i}A_{k_1j}A_{k_2i}A_{k_2j}\right]\\
&= \frac{1}{m^2}\left(\sum_{k=1}^m \mathbb{E}\left[A_{ki}^2A_{kj}^2\right] + \sum_{k_1 \ne k_2}\mathbb{E}\left[A_{k_1i}A_{k_1j}A_{k_2i}A_{k_2j}\right]\right) \\
&=  \frac{1}{m^2}\left(\sum_{k=1}^m \mathbb{E}\left[A_{ki}^2\right]\mathbb{E}\left[A_{kj}^2\right] + \sum_{k_1 \ne k_2}\mathbb{E}[A_{k_1i}]\mathbb{E}[A_{k_1j}]\mathbb{E}[A_{k_2i}]\mathbb{E}[A_{k_2j}]\right)\\
&=\frac{1}{m^2}\cdot (m + 0) =  \frac{1}{m}.
\end{align*}

In the case of $i=j$, then,
\begin{align}
Var\left[\frac{1}{m}(A^TA)_{ii}\right] &= \frac{1}{m^2}\cdot Var\left[\sum_{k=1}^m A_{ki}^2\right] = \frac{1}{m^2}\cdot \sum_{k=1}^m Var\left[ A_{ki}^2\right] \overset{(a)}{=} \frac{1}{m^2}(m\cdot2) = \frac{2}{m}.
\end{align}
In the equality (a) above, we used the fact that $A_{ki}^2\sim \chi^2(1)$. Therefore, $\lim_{m \to \infty} Var(\frac{1}{m}(A^TA))=0$.

Now applying Chebyshev's inequality we get,
\begin{align}
Pr(|\frac{1}{m}A^TA-I_d|\geq\epsilon)\leq \frac{Var(\frac{1}{m}(A^TA))}{\epsilon}
\end{align}
Obviously for any $\epsilon\geq0$ as $m\rightarrow\infty$, the R.H.S. goes to zero. Thus, $\frac{1}{m}A^TA \to I_{d\times d}, ~~ \textrm{in  probability.}$
\end{proof}

\subsection{Proof of Lemma \ref{lemma:probability}}
\begin{proof}
    Note that the random vector $\rvw$ is isotropically distributed and that only inner products $\rvw^T \rvv_1$ and $\rvw^T \rvv_2$ appear, hence we can assume without loss of generality that (if not, one can rotate the coordinate system to make it true):
    \begin{align*}
        \rvv_1 &= \|\rvv_1\| (1, 0, 0, \cdots, 0), \\
        \rvv_2 &= \|\rvv_2\| (\cos \theta, \sin \theta, 0, \cdots, 0).
    \end{align*}
    In this setting, the only relevant parts of $\rvw$ are its first two scalar components $w_1$ and $w_2$. Define $\tilde{\rvw}$ as
    \begin{equation}
        \tilde{\rvw} = (w_1, w_2, 0, \cdots, 0) = \sqrt{w_1^2 + w_2^2}(\cos \omega, \sin\omega, 0, \cdots, 0). 
    \end{equation}
    Then, 
    \begin{align*}
       \mathbb{P}[(\rvw^T \rvv_1 \ge 0) \wedge (\rvw^T \rvv_2 \ge 0)] = \mathbb{P}[(\tilde{\rvw}^T \rvv_1 \ge 0) \wedge (\tilde{\rvw}^T \rvv_2 \ge 0)] =\frac{1}{2\pi}\int_{\theta-\frac{\pi}{2}}^{\frac{\pi}{2}} d\omega = \frac{1}{2} -  \frac{\theta}{2\pi}.
    \end{align*}
\end{proof}

\subsection{Proof of Lemma \ref{lemma:b_I_b}}
\begin{proof} Note that the ReLU activation function $\sigma(z)$ can be written as $z\mathbb{I}_{z\ge 0}$. We have,
    \begin{align*}
        \langle\rvu_1, \rvu_2\rangle &= \frac{2}{q}\rvv_1^T W^T \mathbb{I}_{\{W\rvv_1 \ge 0, W\rvv_2 \ge 0\}} W \rvv_2 \\
       &= \frac{2}{q}\sum_{i=1}^q \rvv_1^T (W_{i\cdot})^T\mathbb{I}_{\{W_{i\cdot}\rvv_1 \ge 0, W_{i\cdot}\rvv_2 \ge 0\}} W_{i\cdot} \rvv_2\\
        &\overset{q\to\infty}{=} 2 \mathbb{E}_{\rvw\sim \mathcal{N}(0,I_{p\times p})} [\rvv_1^T\rvw \mathbb{I}_{\{\rvw^T\rvv_1 \ge 0, \rvw^T\rvv_2 \ge 0\}} \rvw^T \rvv_2]
    \end{align*}
    Note that the random vector $\rvw$ is isotropically distributed and that only inner products $\rvw^T \rvv_1$ and $\rvw^T \rvv_2$ appear, hence we can assume without loss of generality that (if not, one can rotate the coordinate system to make it true):
    \begin{align*}
        \rvv_1 &= \|\rvv_1\| (1, 0, 0, \cdots, 0), \\
        \rvv_2 &= \|\rvv_2\| (\cos \theta, \sin \theta, 0, \cdots, 0).
    \end{align*}
    In this setting, the only relevant parts of $\rvw$ are its first two scalar components $w_1$ and $w_2$. Define $\tilde{\rvw}$ as
    \begin{equation}
        \tilde{\rvw} = (w_1, w_2, 0, \cdots, 0) = \sqrt{w_1^2 + w_2^2}(\cos \omega, \sin\omega, 0, \cdots, 0). 
    \end{equation}
    Then, in the limit of $q\to\infty$,
    \begin{align*}
        \langle\rvu_1, \rvu_2\rangle &= 2 \mathbb{E}_{\rvw\sim \mathcal{N}(0,I_{p\times p})} [\rvv_1^T\rvw \mathbb{I}_{\{\rvw^T\rvv_1 \ge 0, \rvw^T\rvv_2 \ge 0\}} \rvw^T \rvv_2]\\
        &= 2 \mathbb{E}_{\tilde{\rvw}\sim \mathcal{N}(0,I_{2\times 2})} [\rvv_1^T\tilde{\rvw} \mathbb{I}_{\{\tilde{\rvw}^T\rvv_1 \ge 0, \tilde{\rvw}^T\rvv_2 \ge 0\}} \tilde{\rvw}^T \rvv_2]\\
        &= 2 \|\rvv_1\| \|\rvv_2\| \cdot \mathbb{E}_{\tilde{\rvw}\sim \mathcal{N}(0,I_{2\times 2})} [\|\tilde{\rvw}\|^2] \cdot \frac{1}{2\pi} \int_{\theta-\frac{\pi}{2}}^{\frac{\pi}{2}}\cos \omega \cos (\theta-\omega) d\omega\\
        &= 2 \|\rvv_1\| \|\rvv_2\|\cdot 2 \cdot \frac{1}{4\pi}\left((\pi-\theta)\cos \theta + \sin \theta\right)\\
        &= \|\rvv_1\| \|\rvv_2\| \frac{1}{\pi}\left((\pi-\theta)\cos \theta + \sin \theta\right).
    \end{align*}
\end{proof}

\subsection{Proof of Lemma \ref{lemma:W_I_W}}
\begin{proof}
    \begin{align*}
        A_1 A_2^T &= \frac{2}{q} \sum_{k=1}^q U_{\cdot k} \mathbb{I}_{\{W_{k\cdot}\rvv_1\ge 0, W_{k\cdot}\rvv_2\ge 0\}} (U_{\cdot k})^T\\
        &\overset{q\to\infty}{=} 2 \cdot \mathbb{E}_{\rvu\sim \mathcal{N}(0,I_{s\times s}), \rvw\sim \mathcal{N}(0,I_{p\times p})}[\rvu \rvu^T \mathbb{I}_{\{\rvw^T\rvv_1\ge 0, \rvw^T\rvv_2\ge 0\}}]\\
        &\overset{(a)}{=} 2\cdot \mathbb{E}_{\rvu\sim \mathcal{N}(0,I_{s\times s})}[\rvu \rvu^T] \cdot \mathbb{E}_{\rvw\sim \mathcal{N}(0,I_{p\times p})}[\mathbb{I}_{\{\rvw^T\rvv_1\ge 0, \rvw^T\rvv_2\ge 0\}}]\\
        &=2\cdot \mathbb{E}_{\rvu\sim \mathcal{N}(0,I_{s\times s})}[\rvu \rvu^T] \cdot \mathbb{P}[(\rvw^T\rvv_1\ge 0)\wedge(\rvw^T\rvv_2\ge 0)]\\
        &\overset{(b)}{=} \frac{\pi-\theta}{\pi} I_{s\times s}.
    \end{align*}
    In the step (a) above, we used the fact that $U$ is independent of $W$, $\rvv_1$ and $\rvv_2$. In the step (b) above, we applied Lemma \ref{lemma:probability}, and used the fact that $\mathbb{E}_{\rvu\sim \mathcal{N}(0,I_{s\times s})}[\rvu \rvu^T]=I_{s\times s}$.
\end{proof}

\subsection{Proof of Lemma \ref{lemma:AIA_ge_AA}}
\begin{proof}
    Starting from the definition of the smallest eigenvalue, we have that $\lambda_{min}(B')$ satisfies
\begin{align}
    \lambda_{min}(B') &= \min_{\rvu\ne 0} \frac{\rvu^TB' \rvu}{\|\rvu\|^2} 
    % = \min_{\rvu\ne 0}\frac{\left\|\sum_{i=1}^n u_i \nabla f(\rvx_i)\right\|^2}{\sum_{i=1}^n u_i^2}
    \nonumber\\
    &=\min_{\rvu\ne 0}\frac{\sum_{l=1}^q \sum_{k=1}^p(\sum_{i=1}^n \sqrt{2} u_i A_{ik} \mathbb{I}_{\{W_{l:}A_{i:}\ge 0\}})^2}{\sum_{i=1}^n u_i^2}\nonumber\\
    &=\min_{\rvu\ne 0}\sum_{l=1}^q\frac{\sum_{i=1}^n 2 (u_i  \mathbb{I}_{\{W_{l:}A_{i:}\ge 0\}})^2}{\sum_{i=1}^n u_i^2}\frac{\sum_{k=1}^p(\sum_{i=1}^n \sqrt{2} u_i A_{ik} \mathbb{I}_{\{W_{l:}A_{i:}\ge 0\}})^2}{\sum_{i=1}^n 2 (u_i  \mathbb{I}_{\{W_{l:}A_{i:}\ge 0\}})^2}\nonumber\\
    &\overset{(a)}{>}\min_{\rvu\ne 0}\sum_{l=1}^q\frac{\sum_{i=1}^n 2 (u_i  \mathbb{I}_{\{W_{l:}A_{i:}\ge 0\}})^2}{\sum_{i=1}^n u_i^2} \lambda_{min}(B).\label{eq:app_pf_aux_1}
\end{align}
In the inequality $(a)$ above, we made the following treatment: for each fixed $l$, we consider $u_i \mathbb{I}_{\{W_{l:}A_{i:}\ge 0\}}$ as the $i$-th component of a vector $\rvu'_l$; by definition, the minimum eigenvalue of matrix $B=AA^T$
\begin{equation}\lambda_{min}(B) = \min_{\rvu'\ne 0} (\rvu')^T B\rvu'/\|\rvu'\|^2 \le   (\rvu'_j)^T B \rvu'_j/\|\rvu'_j\|^2, ~~ \forall j; 
\end{equation}
moreover, this $\le$ inequality becomes equality, if and only if all $\rvu'_j$ are the same and equal to $\arg \min_{\rvu'\ne 0} (\rvu')^T G \rvu'/\|\rvu'\|^2$. It is easy to see, when the dataset is not degenerate, for different $j$, $\rvu'_j$ are different, hence only the strict inequality $<$ holds in step $(a)$. 

Continuing from Eq.(\ref{eq:app_pf_aux_1}), we have
\begin{align*}
    \lambda_{min}(B') & > \min_{\rvu\ne 0}\sum_{l=1}^q\frac{\sum_{i=1}^n 2 (u_i  \mathbb{I}_{\{\{W_{l:}A_{i:}\ge 0\}})^2}{\sum_{i=1}^n u_i^2} \lambda_{min}(B)\\
    &=  \min_{\rvu\ne 0}\frac{\sum_{i=1}^n 2 u_i^2 \sum_{l=1}^q \mathbb{I}_{\{\{W_{l:}A_{i:}\ge 0\}})}{\sum_{i=1}^n u_i^2}\lambda_{min}(B) \\
    & = \min_{\rvu\ne 0}\frac{\sum_{i=1}^n u_i^2}{\sum_{i=1}^n u_i^2}\lambda_{min}(B) = \lambda_{min}(B).
\end{align*}
Therefore, we have that $\lambda_{min}(B')> \lambda_{min}(B)$.

As for the largest eigenvalue $\lambda_{max}(B')$, we can apply the same logic above for $\lambda_{min}(K)$ (except replacing the $\min$ operator by $\max$ and have $<$ in step $(a)$) to get $\lambda_{max}(B')<\lambda_{max}(B)$.
\end{proof}

\subsection{Proof of Lemma \ref{lemma:feat_angle_relation}}
\begin{proof}
Consider an arbitrary layer $l\in[L]$ of the ReLU neural network $f$ at initialization. Given two arbitrary network inputs $\rvx,\rvz\in\mathbb{R}^d$, the inputs to the $l$-th layer are $\alpha^{(l-1)}(\rvx))$ and $\alpha^{(l-1)}(\rvz))$, respectively. 

By definition, we have
\begin{equation}
 \alpha^{(l)}(\rvx) = \sqrt{\frac{2}{m}}\sigma\left(  W^{(l)}\alpha^{(l-1)}(\rvx)\right), ~~ \alpha^{(l)}(\rvz) = \sqrt{\frac{2}{m}}\sigma\left(  W^{(l)}\alpha^{(l-1)}(\rvz)\right),
\end{equation}
with entries of $W^{(l)}$ being i.i.d. drawn from $\mathcal{N}(0,1)$. Recall that, by definition, the angle between $\alpha^{(l-1)}(\rvx))$ and $\alpha^{(l-1)}(\rvz))$ is $\theta^{(l-1)}(\rvx,\rvz)$. Applying Lemma \ref{lemma:b_I_b}, we immediately have the inner product
\begin{align}
    \langle \alpha^{(l)}(\rvz), \alpha^{(l)}(\rvx)\rangle = &  \frac{1}{\pi}\left( (\pi-\theta^{(l-1)}(\rvx,\rvz))\cos \theta^{(l-1)}(\rvx,\rvz) + \sin \theta^{(l-1)}(\rvx,\rvz)   \right)\nonumber\\
    &\times \|\alpha^{(l-1)}(\rvx)\|\|\alpha^{(l-1)}(\rvz)\|.\label{eq:pf_feat_inner_prod}
\end{align}

In the special case of $\rvx = \rvz$, we have  $\theta^{(l-1)}(\rvx,\rvz)=0$, and obtain from the above equation that
\begin{equation}\label{eq:pf_equal_norm}
    \|\alpha^{(l)}(\rvx)\|^2 = \|\alpha^{(l-1)}(\rvx)\|^2.
\end{equation}

Apply Eq.(\ref{eq:pf_equal_norm}) back to Eq.(\ref{eq:pf_feat_inner_prod}), we also get 
\begin{equation}
 \cos \theta^{(l)}(\rvx,\rvz) = \frac{\langle \alpha^{(l)}(\rvz), \alpha^{(l)}(\rvx)\rangle}{\|\alpha^{(l)}(\rvx)\|\|\alpha^{(l)}(\rvz)\|} = \frac{1}{\pi}\left( (\pi-\theta^{(l-1)}(\rvx,\rvz))\cos \theta^{(l-1)}(\rvx,\rvz) + \sin \theta^{(l-1)}(\rvx,\rvz)   \right)
\end{equation}
That is $\theta^{(l)}(\rvx,\rvz) = g(\theta^{(l-1)}(\rvx,\rvz))$. 
Recursively apply this relation, we obtain the desired result.
\end{proof}

\subsection{Proof of Lemma \ref{thm:feature_angle_small_o}}\label{sec_app:feature_angle_small_o}
\begin{proof}
    By Lemma \ref{lemma:feat_angle_relation}, we have that 
\begin{align*}
    \cos \theta^{(l)}(\rvx,\rvz) &= \left(1-\frac{\theta^{(l-1)}(\rvx,\rvz)}{\pi}\right)\cos \theta^{(l-1)}(\rvx,\rvz) + \frac{1}{\pi}\sin \theta^{(l-1)}(\rvx,\rvz)\\
    &= \cos \theta^{(l-1)}(\rvx,\rvz)\left(1+ \frac{1}{\pi}\left(\tan \theta^{(l-1)}(\rvx,\rvz) - \theta^{(l-1)}(\rvx,\rvz)\right)\right)\\
    &= \cos \theta^{(l-1)}(\rvx,\rvz)\left(1+ \frac{1}{3\pi}(\theta^{(l-1)}(\rvx,\rvz))^3 + o\left((\theta^{(l-1)}(\rvx,\rvz))^3\right)\right).
\end{align*}
Noting that the Taylor expansion of the $\cos$ function at zero is $\cos z = 1-\frac{1}{2}z^2 + o(z^3)$, one can easily check that, for all $l\in [L]$, 
\begin{equation}
    \theta^{(l)}(\rvx,\rvz) = \theta^{(l-1)}(\rvx,\rvz) - \frac{1}{3\pi}(\theta^{(l-1)}(\rvx,\rvz))^2 + o\left((\theta^{(l-1)}(\rvx,\rvz))^2\right).\label{eq:theta_l_recursive}
\end{equation}
Note that $\theta^{(l)}(\rvx,\rvz) \le \theta^{(l-1)}(\rvx,\rvz) = o(1/L)$.
Iteratively apply the above equation, one gets, for all $l\in [L]$, if $\theta^{(0)}(\rvx,\rvz)=o(1/L)$,
\begin{equation}
    \theta^{(l)}(\rvx,\rvz) = \theta^{(0)}(\rvx,\rvz) - \frac{l}{3\pi}(\theta^{(0)}(\rvx,\rvz))^2 + o\left((\theta^{(0)}(\rvx,\rvz))^2\right).
\end{equation}
\end{proof}

\end{document}